\newtheorem{theorem}{Theorem}[section]
\begin{document}

\title{Latent Danger Zone: Distilling Unified Attention for \\ Cross-Architecture Black-box Attacks}

\author{Yang Li,~\IEEEmembership{Member,~IEEE}
        Chenyu Wang,
        Tingrui Wang, 
        Yongwei Wang,~\IEEEmembership{Member,~IEEE} \\
        Haonan Li,
        Zhunga Liu,~\IEEEmembership{Member,~IEEE} 
        Quan Pan,~\IEEEmembership{Member,~IEEE} 
        
\thanks{This work was partly by the Key Program of the National Natural Science Foundation of China under Grant 62233014, and partly supported by the Youth Program of the National Natural Science Foundation of China under Grant 62103330. \textit{(Corresponding author: Yang Li.)}}
\thanks{Yang Li, Chenyu Wang, Tingrui Wang, Haonan Li, Zhunga Liu and Quan Pan are with the School of Automation at Northwestern Polytechnical University, Xi’an, 710129, China (e-mail: liyangnpu@nwpu.edu.cn).}
\thanks{Yongwei Wang is with the School of Zhejiang University, Hangzhou, 310007, China.}}

\makeatother

\maketitle

\begin{abstract}
Black-box adversarial attacks remain challenging due to limited access to model internals. Existing methods often depend on specific network architectures or require numerous queries, resulting in limited cross-architecture transferability and high query costs. To address these limitations, we propose JAD, a latent diffusion model framework for black-box adversarial attacks. JAD generates adversarial examples by leveraging a latent diffusion model guided by attention maps distilled from both a convolutional neural network (CNN) and a Vision Transformer (ViT) models. By focusing on image regions that are commonly sensitive across architectures, this approach crafts adversarial perturbations that transfer effectively between different model types. This joint attention distillation strategy enables JAD to be architecture-agnostic, achieving superior attack generalization across diverse models. Moreover, the generative nature of the diffusion framework yields high adversarial sample generation efficiency by reducing reliance on iterative queries. Experiments demonstrate that JAD offers improved attack generalization, generation efficiency, and cross-architecture transferability compared to existing methods, providing a promising and effective paradigm for black-box adversarial attacks.
\end{abstract}

\section{Introduction}
\IEEEPARstart {A}{dversarial} attacks against deep neural networks have achieved remarkable results in controlled settings. Yet, black-box attack scenarios — where an attacker has no access to model internals — remain challenging. A common approach to black-box attacks is to craft adversarial examples on a surrogate model (often a convolutional neural network, CNN)\cite{he2016deep,Liu_2022_CVPR,DBLP:journals/corr/RenHG015,DBLP:journals/corr/RonnebergerFB15} and rely on their transferability to fool the unseen target model. However, this transfer-based strategy typically assumes the target shares a similar architecture to the surrogate. In practice, adversarial perturbations optimized on a specific architecture often fail to transfer across fundamentally different model families~\cite{10378472}. For instance, an adversarial pattern generated for a CNN may exploit texture biases that a Vision Transformer (ViT)\cite{dosovitskiy2021imageworth16x16words,DBLP:journals/corr/abs-2104-14294,elnouby2021xcitcrosscovarianceimagetransformers,carion2020endtoendobjectdetectiontransformers} does not share, yielding significantly degraded attack success on the ViT. In summary, current black-box attacks largely leverage architecture-specific features (e.g., CNN textures), making it difficult to attack models of other architectures, such as Transformers effectively.

Recently, generative approaches have been proposed to improve black-box attack efficiency. Instead of performing many queries or iterative optimization for each new input, these methods learn a generator that produces adversarial examples directly. For example, the Conditional Diffusion Model Attack (CDMA)\cite{liu2024boosting} trains a diffusion model to transform any benign input into an adversarial example in one forward pass. Such a generative attacker can drastically cut down query counts, in some cases to nearly one query per attack, making black-box attacks far more practical. However, the CDMA study does not address cross-architecture attacks against Transformer-based models. Our experiments reveal that CDMA’s performance degrades when attacking target networks whose architectures differ from the surrogate used during training, highlighting its limited cross-architecture generalization. This suggests that existing generative approaches implicitly overfit to architecture-specific patterns, such as the texture bias in CNNs, and struggle to capture vulnerabilities shared across heterogeneous model families.
To improve transferability between ViTs and CNNs, the Integrated Gradients-based method creates transferable adversarial examples by using path-integrated gradients and momentum to escape local optima~\cite{10378472,ren2025improving}. This method involves iterative gradient optimization for both ViTs and CNNs. However, it relies on iterative computation of integrated gradients and momentum-based accumulation, which depends on source model gradients and multiple iterations to estimate optimal perturbations.

In military operations, a sniper and an observer form a deadly duo. As illustrated in Fig. \ref{fig: military}, the sniper uses a precision rifle to target and strike specific objectives, while the spotter has a broader vantage point, identifying strategically critical areas. Individually, they excel at different tasks — the sniper at pinpoint accuracy, the observer at situational awareness. However, it is their coordination that amplifies their effectiveness, allowing them to eliminate targets with unmatched precision and range jointly. Similarly, CNNs and Vision Transformers possess distinct strengths: CNNs excel at capturing local textures and edges, analogous to the sniper's focus, while ViTs provide a broader perspective by establishing global dependencies across patches, akin to the observer's strategic oversight \cite{raghu2022visiontransformerslikeconvolutional}. Yet, existing generative attacks, like CDMA, function as a lone sniper — highly effective within a single architecture but unable to leverage the strategic advantage offered by joint attention from both CNNs and ViTs. Thus, a natural question arises: Can we design a generative attack framework that coordinates the strengths of both CNNs and ViTs to strike vulnerable regions that are commonly sensitive across both architectures?

Motivated by this analogy, we propose {\bf{JAD}} – a Joint Attention Distillation in the latent space Attack for black-box adversarial example generation. Our approach employs a latent diffusion model as a universal adversarial generator guided by knowledge distilled from both CNN and ViT models. In a one-time offline training phase, we fuse the attention maps (saliency indicators of model sensitivity) from a CNN surrogate (e.g., VGG) and a ViT surrogate into a unified attention objective. By distilling the joint attention of these heterogeneous models, we teach the diffusion generator where both model types are vulnerable. In other words, JAD aligns the perturbation focus on the common fragile regions shared by CNNs and Transformers. This attention-guided training drives the latent diffusion model to produce adversarial perturbations that simultaneously exploit features important to both architectures. As a result, the generated attacks exhibit strong cross-architecture transferability, succeeding against a wide range of model families in the black-box setting. Importantly, because JAD yields a direct generative mapping from inputs to adversarials, it retains the query efficiency advantage, often requiring only a verification query or none at all during deployment, similar to CDMA’s paradigm.

We conduct extensive experiments on three standard vision benchmarks (ImageNet, CIFAR-10, and CIFAR-100), comparing JAD against state-of-the-art black-box attack methods across all three major categories: pure black-box, query-based, and transfer-based. The results demonstrate that JAD consistently achieves higher attack success rates, better cross-architecture generalization, and significantly improved query efficiency relative to these methods. Notably, JAD significantly improves black-box success rates on transformer-based vision models without needing architecture-specific tuning, while also reducing the query cost per attack. These advantages underscore the effectiveness of JAD across diverse black-box attack scenarios. 
\begin{figure}[!tb]
	\centering
	\includegraphics[width=1\linewidth]{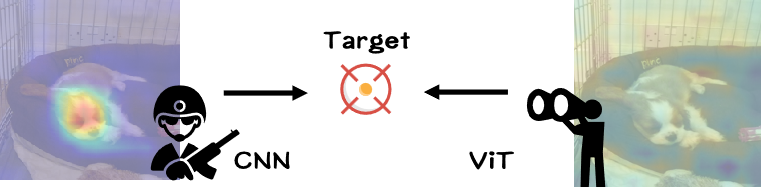}
	\caption{Illustration of the sniper-spotter analogy for CNN and ViT models: Similar to a sniper, the CNN model focuses on fine-grained local details (top heatmap), targeting specific regions with high precision. Meanwhile, the ViT model, analogous to a spotter, provides a broader, global perspective (bottom heatmap), capturing contextual semantic information across the entire scene. The combination of these complementary views helps to identify and target the most vulnerable regions effectively.}
    \label{fig: military}
    \vspace{-0.4cm}
\end{figure}
In summary, our contributions are as follows:

\begin{itemize}
\item We identify and address the challenge of attacking fundamentally different model architectures in black-box settings. JAD is the first generative attack framework to bridge CNN and Transformer vulnerabilities, enabling effective transferable attacks across architectures.
\item We introduce a novel attention distillation technique that fuses saliency maps from a CNN and a ViT into the training of a latent diffusion generator. This guides the generator to align perturbations with critical regions common to both architectures, greatly enhancing attack generalization.
\item By focusing on shared weak spots and leveraging a diffusion model, JAD achieves higher success rates on both CNN and ViT targets than existing methods (e.g., CDMA) under stringent query limits. Our approach demonstrates state-of-the-art transferability across heterogeneous models while often requiring only a single query at attack time, marking a substantial step toward practical and architecture-agnostic black-box adversarial attacks.
\end{itemize}

These results underline the potential of combining insights from disparate network architectures within a generative attack paradigm. By jointly distilling CNN and Transformer perspectives into adversarial example generation, JAD opens a new avenue for highly transferable and query-efficient black-box attacks, moving closer to real-world applicability.

\section{Related Works}
\subsection{Black-box Adversarial Attacks}
Black-box adversarial attacks assume no access to internal model parameters or gradients, relying instead on queries or surrogate models to craft adversarial perturbations~\cite{li2023few}. Existing black-box attack methodologies can be broadly categorized into query-based, transfer-based, and generative approaches. Query-based attacks, such as Zeroth-Order Optimization (ZOO) \cite{Chen_2017}, NES \cite{DBLP:journals/corr/abs-1804-08598}, and boundary-based methods \cite{brendel2018decisionbasedadversarialattacksreliable}, iteratively refine perturbations based on model output confidence scores or decisions. Although effective, these methods typically incur substantial query budgets, limiting their practical applicability. Alternatively, transfer-based approaches exploit adversarial examples crafted on white-box surrogate models to attack unknown target models without explicit queries. Techniques such as momentum-based iterative methods (MI-FGSM) \cite{DBLP:journals/corr/abs-1710-06081}, diverse input transformations \cite{DBLP:journals/corr/abs-1803-06978}\cite{long2022frequencydomainmodelaugmentation}, feature-space attacks \cite{finlay2019improvedrobustnessadversarialexamples}, and ensemble-based strategies \cite{DBLP:journals/corr/LiuCLS16}\cite{DBLP:journals/corr/abs-2103-15571} significantly enhance cross-model transferability. However, the success of these methods heavily depends on the similarity between surrogate and target model architectures, often leading to degraded performance when attacking fundamentally different models (e.g., CNNs vs. Transformers). Recently, generative attack methods leveraging GANs \cite{DBLP:journals/corr/abs-1801-02610}\cite{DBLP:journals/corr/abs-1905-11736}, autoencoders \cite{DBLP:journals/corr/abs-1712-02328}, and diffusion models\cite{liu2024boosting} have emerged to produce adversarial examples with improved perceptual quality and efficiency. For instance, DiffAttack \cite{chen2023diffusionmodelsimperceptibletransferable} introduces perturbations within the latent space of diffusion models, achieving high imperceptibility along with notable cross-architecture transferability. Nevertheless, existing generative attacks typically remain limited in effectively bridging distinct architecture gaps, particularly between CNN and Transformer models, due to their reliance on architecture-specific biases. Addressing this limitation motivates the development of approaches capable of explicitly distilling and exploiting shared vulnerabilities across diverse model architectures, a gap our work seeks to fill.
\subsection{Transferable Attacks}
Enhancing the transferability of adversarial examples across diverse models remains a central challenge in adversarial attack research. Previous works have explored various strategies to ensure that perturbations generated on surrogate models do not overfit specific architectures, thus improving their generalization. A prominent line of research stabilizes gradient computation or modifies optimization routines to avoid local optima. For example, Dong et al.\cite{DBLP:journals/corr/abs-1710-06081} introduced momentum-based iterative methods such as MI-FGSM\cite{dong2018boosting}, significantly enhancing the robustness and transferability of attacks across multiple black-box targets. Lin et al. \cite{Long_Tao_LI_Lei_Zhang_2024} further improved upon this strategy by refining gradient computation positions, enabling the adversarial examples to generalize more effectively. Similarly, Wang et al.~\cite{DBLP:journals/corr/abs-2103-15571} proposed variance tuning strategies to adjust gradient directions dynamically, reducing the risk of convergence to suboptimal, model-specific solutions.

Another important direction involves leveraging data augmentations and model ensemble strategies. Techniques introduced by Xie et al.~\cite{DBLP:journals/corr/abs-1803-06978}, such as input transformations including random cropping and resizing, effectively diversify the perturbations, resulting in better transferability. Long et al.~\cite{long2022frequencydomainmodelaugmentation} demonstrated that integrating gradients from multiple surrogate models can produce universal perturbations with enhanced cross-model generalization. More recently, adaptive model ensemble methods dynamically weight different surrogates during the attack generation process, further boosting transferability\cite{chen2023adaptivemodelensembleadversarial}.

Beyond pixel-level perturbations, attacking in the intermediate feature space or leveraging attention mechanisms has proven effective. Inkawhich et al.~\cite{8953700} demonstrated that perturbations applied in feature spaces yield adversarial examples with improved semantic coherence and cross-model effectiveness. Additionally, Wu et al.\cite{9156604} employed attention mechanisms to concentrate perturbations on model-critical regions, significantly strengthening attack transferability.

The recent emergence of diffusion models has provided a powerful generative framework for adversarial attacks. Chen et al. \cite{chen2023diffusionmodelsimperceptibletransferable} proposed DiffAttack, a method that introduces perturbations in the latent space of diffusion models, achieving high imperceptibility and excellent transferability across various black-box models. By leveraging the expressive capability of diffusion-based generative models alongside guidance from sensitive regions identified by discriminative models, DiffAttack demonstrates robust performance in generating transferable and visually natural adversarial examples.

Despite these advances, current methods still encounter critical challenges, such as poor transferability across fundamentally different architectures like CNNs and Vision Transformers (ViTs), and persistent perceptibility issues of adversarial examples. Therefore, developing a method capable of bridging the architectural gap, generating attacks transferable between CNNs and Transformers while maintaining high perceptual quality, remains an open and significant research problem.
\subsection{Latent Diffusion Models and Attention Distillation}
Latent Diffusion Models (LDMs) have emerged as a powerful class of generative models that perform diffusion in a lower-dimensional latent space (produced by an autoencoder), rather than in the pixel space~\cite{Robin2021}. This approach retains the quality of generated images while drastically improving efficiency, since the diffusion process operates on a compressed representation. Rombach et al. \cite{Robin2021} showed that LDMs can achieve high-fidelity image synthesis with far less computational cost than pixel-space diffusion, reaching an optimal trade-off between complexity and detail preservation. Such models also readily incorporate conditioning mechanisms (e.g. cross-attention for text-to-image), making them flexible and suited for tasks like controlled image generation. Separately, attention distillation (or attention transfer) refers to techniques from knowledge distillation that transfer the “attention” knowledge from one network to another. Zagoruyko et al. \cite{zagoruyko2016paying} introduced this idea by having a student network learn to mimic the spatial attention maps of a teacher network, which improved the student’s performance. Since then, attention-based distillation has been applied in model compression and multi-model learning, even across different network architectures, to align feature importance between a teacher and student. JAD brings these two concepts together in a novel way. It utilizes a latent diffusion model as the engine to generate adversarial examples, benefiting from the efficiency and high capacity of LDMs to explore the image perturbation space. Concurrently, during the training of this diffusion-based generator, JAD distills attention information from multiple reference models (e.g., a set of diverse architectures including CNNs and ViTs) into the generative process. This cross-architecture attention distillation guides the latent diffusion model to produce perturbations that are effective across different models by focusing on universally “important” features. This approach is a key innovation distinguishing JAD from previous diffusion-based attacks like CDMA – instead of treating each model in isolation, JAD’s generator internalizes a broader notion of model-agnostic adversarial focus. By merging latent-space diffusion with attention transfer learning, JAD achieves a synergy that allows it to create highly transferable and query-efficient adversarial examples beyond the reach of prior black-box attack methods.

\begin{figure*}[!tb]
	\centering
	\includegraphics[width=1\linewidth]{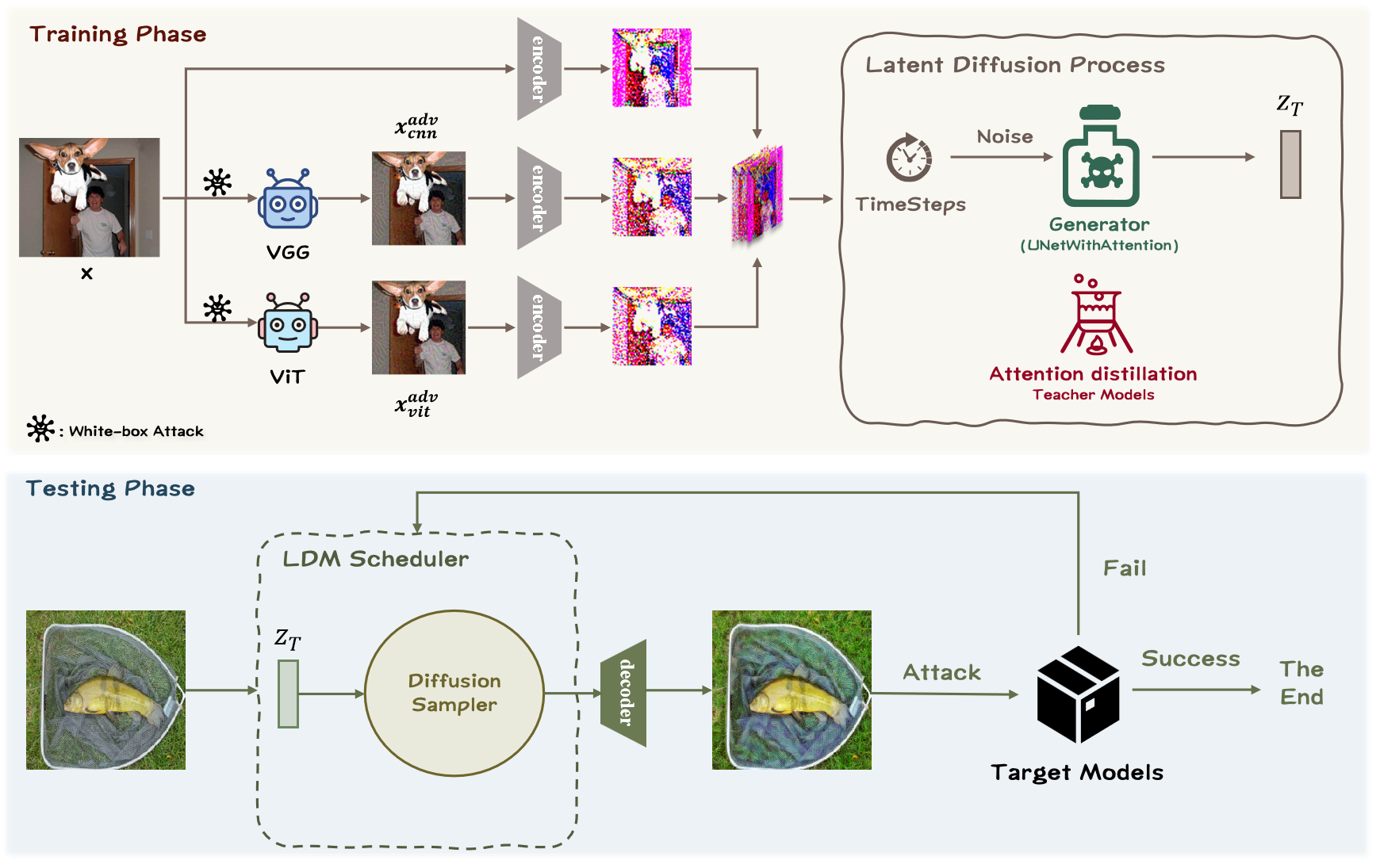}
	\caption{Overview of JADAttack framework.}
    \label{fig:overview}
    \vspace{-0.4cm}
\end{figure*}%

\section{Methodology}
JAD consists of three phases: data preparation and joint attention extraction, latent diffusion training with joint attention distillation, and adversarial sample generation in the latent space. The overall framework of the JAD is illustrated in Figure~\ref{fig:overview}, with detailed technical explanations of each stage to be elaborated in subsequent sections.
\subsection{Problem Formulation}

Given a target classifier $F_{\theta}:\mathcal{X} \rightarrow \mathcal{Y}$ with unknown parameters $\theta$, the attacker seeks to generate an adversarial example $x^{adv}$ from a clean input $x \in \mathcal{X}$ such that:
\begin{equation}
\label{eq:goal}
F_{\theta}(x^{adv}) \neq F_{\theta}(x), \quad \|x^{adv} - x\|_p \leq \epsilon,
\end{equation}
where $\epsilon > 0$ bounds the imperceptibility of the perturbation under the $L_p$ norm.

Instead of solving Eq.~\ref{eq:goal} via iterative optimization or surrogate-based gradient guidance, we adopt a generative approach based on latent diffusion modeling. Specifically, let $E$ and $D$ denote the encoder and decoder of a pretrained variational autoencoder (VAE). We map an input image $x$ to its latent representation $z = E(x) \in \mathcal{Z}$, and define the adversarial transformation in latent space as:
\begin{equation}
\label{eq:z_adv}
z^{adv} = z + \delta_z, \quad \|\delta_z\|_2 \leq \varepsilon_z,
\end{equation}
where $\delta_z$ is the perturbation introduced in the latent space, bounded by a latent budget $\varepsilon_z$. The final adversarial image is reconstructed as:
\begin{equation}
\label{eq:x_adv}
x^{adv} = D(z^{adv}).
\end{equation}

To model the transformation from $z$ to $z^{adv}$, we train a conditional latent diffusion model $G_\theta$ that learns to inject adversarial noise into the latent representation via a denoising objective. Importantly, rather than relying on classifier labels or loss gradients, our training is guided by architectural attention maps derived from white-box attacks on two teacher models: a CNN and a ViT.
Subsequently, the black-box attack is carried out based on the hypothesis outlined below:
\begin{tcolorbox}  
\textbf{Hypothesis:} Under the black-box attack scenario, consider the target model family as a heterogeneous architecture set $\mathcal{M} = \{M_{\text{CNN}}, M_{\text{ViT}}\}$. An intersection exists between CNNs and ViTs in attention responses (semantically sensitive regions), i.e.,  
$$
\exists \Omega \subseteq \mathcal{X} \quad \text{such that} \quad \frac{\partial \mathcal{L}}{\partial x}\bigg|_{M_{\text{CNN}}} \cap \frac{\partial \mathcal{L}}{\partial x}\bigg|_{M_{\text{ViT}}} \neq \emptyset
$$  
Joint attention distillation encodes $\Omega$ as prior knowledge for the generator.
\end{tcolorbox}
The generator is trained to produce adversarial perturbations that (i) align with cross-model sensitive regions, and (ii) concentrate adversarial changes within semantically salient areas. At inference time, the generator receives a clean image $x$, encodes it to the latent space, and generates $x^{adv}$ with a single forward pass through the learned diffusion process. The resulting $x^{adv}$ exhibits strong cross-architecture transferability in the black-box setting, without requiring any surrogate classifier.
\subsection{Adversarial Training Data Preparation and Joint Attention Extraction}
To train the proposed JAD framework, a set of labeled clean images and their corresponding adversarial counterparts, along with associated attention maps, is first constructed. Specifically, we utilize two different models: a VGG model, based on the CNN architecture, and a ViT model, based on the Transformer architecture, to generate white-box adversarial examples. 

For each original image $x$ with label $y$, we employ two white-box attack methods - Projected Gradient Descent (PGD) and Momentum Iterative Fast Gradient Sign Method (MI-FGSM) - to generate adversarial counterparts specifically targeting both VGG and ViT. 

Along with the image pairs, attention heatmaps from both teacher models are extracted to identify critical regions. For the CNN, we employ Gradient-weighted Class Activation Mapping (Grad-CAM) to obtain a coarse localization map $A^{cnn}(x)$ highlighting important image regions for prediction\cite{DBLP:journals/corr/SelvarajuDVCPB16}. For the ViT, we use the model’s final-layer self-attention to the class token (averaged across heads) to derive an attention map $A^{vit}(x)$, which reflects the contribution of each patch to the Transformer's output.
Both attention maps are first interpolated to a common resolution $(H,W)$ and normalized individually to the range [0,1]. Subsequently, we fuse these attention maps into a joint teacher attention map $A^{T}(x)$ through a two-step attention fusion strategy as follows.

{\bf{Step 1: Layer-wise Attention Fusion within Models.}} Given multiple attention maps $\{A_i^{\mathrm{cnn}}(x)\}_{i=1}^{L_{\mathrm{cnn}}}$ from the VGG layers and $\{A_j^{\mathrm{vit}}(x)\}_{j=1}^{L_{\mathrm{vit}}}$ from the ViT layers, we first compute layer-wise normalized fusion weights based on layer depth:
\begin{equation}
w_i^{cnn} = \frac{i}{\sum_{k=1}^{L_{cnn}} k}, \quad w_j^{vit} = \frac{j}{\sum_{m=1}^{L_{vit}} m}.
\end{equation}
Thus, we obtain combined attention maps for each model:
\begin{equation}
\label{equ:attention}
\begin{aligned}
\tilde{A}^{cnn}(x) &= \sum_{i=1}^{L_{cnn}} w_i^{cnn}\cdot \mathcal{I}(A^{cnn}_i(x)), \\ 
\tilde{A}^{vit}(x) &= \sum_{j=1}^{L_{vit}} w_j^{vit}\cdot \mathcal{I}(A^{vit}_j(x)),
\end{aligned}
\end{equation}
where $\mathcal{I}(\cdot)$ denotes spatial interpolation to the same resolution $(H,W)$ and normalization to [0,1].

{\bf{Step 2: Cross-model Dynamic Attention Fusion.}} Next, we dynamically calculate attention weights based on each model’s attention map response strength (average intensity):

\begin{equation}
\begin{aligned}
&s_{cnn}(x)=\frac{1}{HW}\sum_{h,w}\tilde{A}_{h,w}^{cnn}(x), \\
&s_{vit}(x)=\frac{1}{HW}\sum_{h,w}\tilde{A}_{h,w}^{vit}(x),
\end{aligned}
\end{equation}
and derive the normalized dynamic fusion weights:
\begin{equation}
\begin{aligned}
w^{cnn}(x)&=\frac{s_{cnn}(x)}{s_{cnn}(x)+s_{vit}(x)},\\
w^{vit}(x)&=\frac{s_{vit}(x)}{s_{cnn}(x)+s_{vit}(x)},\
\end{aligned}
\end{equation}
with $w^{cnn}(x)+w^{vit}(x)=1$. Thus, we produce the joint teacher attention map:
\begin{equation}
\label{eq_joint_attention}
A^T(x) = w^{cnn}(x)\cdot\tilde{A}^{cnn}(x) + w^{vit}(x)\cdot\tilde{A}^{vit}(x).
\end{equation}
This joint attention map adaptively weights critical regions that are simultaneously important across both model architectures, guiding the adversarial perturbation generation to enhance robustness and transferability across heterogeneous model structures.

To formalize the effectiveness of the joint attention-guided adversarial perturbations, we present the following theoretical result:

\begin{theorem}
\label{theo1}
Consider two heterogeneous classifiers, a CNN-based model $f_{CNN}$ and a Transformer-based model $f_{ViT}$, with attention maps $A^{cnn}(x)$ and $A^{vit}(x)$ respectively. Let $A^T(x)$ be their joint attention map computed dynamically as Eq.~(\ref{eq_joint_attention}). Define adversarial perturbations $\delta$ generated by a latent diffusion model trained with guidance from $A^T(x)$. Then, we have:
\begin{equation}
\begin{aligned}
& \mathbb{E}_{x\sim\mathcal{D}}[\underbrace{\mathbb{I}(f_{\text{CNN}}(x+\delta) \neq y)}_{\text{CNN attack success}} + \underbrace{\mathbb{I}(f_{\text{ViT}}(x+\delta) \neq y)}_{\text{ViT attack success}}] \geq \\
& \mathbb{E}[\mathbb{I}(f_{\text{CNN}}(x+\delta_{\text{single}})\neq y)] + \mathbb{E}[\mathbb{I}(f_{\text{ViT}}(x+\delta_{\text{single}})\neq y)]
\end{aligned}
\end{equation}
where $\delta_{cnn}$ and $\delta_{vit}$ denote perturbations trained using single-model attention guidance only (CNN or ViT respectively). This indicates the joint attention-driven perturbations offer strictly stronger cross-architecture adversarial transferability.
\end{theorem}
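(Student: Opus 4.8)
The plan is to reduce the theorem to a statement about how the spatial allocation of perturbation energy interacts with each model's sensitivity, and then to exploit the shared region $\Omega$ guaranteed by the Hypothesis. First I would posit a \emph{monotone response} model for attack success: for a model $M \in \{f_{\text{CNN}}, f_{\text{ViT}}\}$ with fused attention map $A^M$ concentrated on a sensitive support $S_M \subseteq \mathcal{X}$, I assume the per-sample success probability is a non-decreasing function of the alignment $\rho_M(\delta) = \langle |\delta|, A^M \rangle$ between the perturbation's spatial magnitude and the model's sensitivity, i.e. $\Pr[f_M(x+\delta)\neq y] = \phi_M(\rho_M(\delta))$ with $\phi_M$ non-decreasing. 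This is the natural formalization of the paper's premise that perturbations deposited on semantically salient regions are more likely to flip the prediction, and it converts the indicator expectations into functions of two scalar alignments. Taking expectations and using linearity, the claim becomes $\mathbb{E}[\phi_{\text{CNN}}(\rho_{\text{CNN}}(\delta)) + \phi_{\text{ViT}}(\rho_{\text{ViT}}(\delta))] \geq \mathbb{E}[\phi_{\text{CNN}}(\rho_{\text{CNN}}(\delta_{\text{single}})) + \phi_{\text{ViT}}(\rho_{\text{ViT}}(\delta_{\text{single}}))]$, where $\delta_{\text{single}}$ is guided by a single map ($A^{cnn}$ or $A^{vit}$) and is used against both targets.

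The core step is to compare, under a common latent budget $\varepsilon_z$, the \emph{summed} alignment $\rho_{\text{CNN}} + \rho_{\text{ViT}}$ for the two guidance choices. I would model the joint generator as concentrating its fixed energy where $A^T$ is largest; by Eq.~(\ref{eq_joint_attention}) the maximum of the convex combination $A^T = w^{cnn}\tilde{A}^{cnn} + w^{vit}\tilde{A}^{vit}$ sits in $\Omega = S_{\text{CNN}}\cap S_{\text{ViT}}$, where \emph{both} fused maps are large, so the joint perturbation attains simultaneously large $\rho_{\text{CNN}}$ and $\rho_{\text{ViT}}$. A single-model perturbation, by contrast, concentrates near its own model's peak, which may lie in the architecture-exclusive region $S_{\text{CNN}}\setminus S_{\text{ViT}}$ (resp.\ the ViT-exclusive region), leaving it near-orthogonal to the other map. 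Decomposing every inner product over $\Omega$ and the two exclusive supports then exhibits the deficit of the single-model choice as energy wasted, from the other architecture's viewpoint, on the exclusive region; this yields $\rho_{\text{CNN}}(\delta) + \rho_{\text{ViT}}(\delta) \geq \rho_{\text{CNN}}(\delta_{\text{single}}) + \rho_{\text{ViT}}(\delta_{\text{single}})$.

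To return from summed alignment to summed success probability I would invoke monotonicity of $\phi_{\text{CNN}},\phi_{\text{ViT}}$ together with the Hypothesis that $\Omega$ has positive measure. On the event that a sample's salient content intersects $\Omega$, the joint perturbation's extra alignment with the otherwise-neglected architecture strictly raises that model's success term by more than it lowers the specialized model's term; on the complementary event the two sides coincide. Integrating over $x \sim \mathcal{D}$ and using that the advantageous event carries positive probability (guaranteed by $\Omega \neq \emptyset$) upgrades the weak inequality to the strict cross-architecture transferability asserted.

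The step I expect to be the main obstacle is making the response model and the energy matching precise enough to be defensible rather than circular. A naive \emph{proportional}-allocation model, $|\delta| \propto A^T$ and $|\delta_{\text{single}}| \propto A^{cnn}$, yields a summed-alignment difference proportional to $w^{vit}(\|A^{vit}\|^2 - \|A^{cnn}\|^2)$, whose sign depends on the relative norms of the normalized maps and therefore need not be positive; so strictness cannot follow from proportionality alone and genuinely relies on the stronger assumption that the joint generator concentrates on the high-value intersection $\Omega$. The inequality also degenerates to equality when the two architectures share identical attention, and can fail if $\phi_M$ is allowed to be arbitrarily non-monotone. I would therefore state the monotonicity of $\phi_M$, the concentration-on-$\Omega$ behavior of the learned generator, and a quantitative lower bound on the cross-architecture alignment gap as explicit assumptions; and if a fully rigorous strict bound proves elusive, I would retreat to proving the non-strict inequality under monotonicity alone, reserving strictness for the regime where that alignment gap is bounded away from zero.
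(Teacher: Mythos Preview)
Your proposal captures the same core intuition as the paper's proof---namely that the fused map $A^T$ concentrates on the intersection $\Omega$ of the two architectures' sensitive supports, so a generator aligned with $A^T$ deposits perturbation energy where both models are vulnerable, whereas a single-model-guided perturbation wastes budget on architecture-exclusive regions. In that sense you have correctly identified the load-bearing idea.

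Where you differ is in the level of formalization. The paper's own proof is a two-sentence heuristic sketch: it simply asserts that (i) $A^T$ identifies regions where $\nabla_x\mathcal{L}_{\text{CNN}}$ and $\nabla_x\mathcal{L}_{\text{ViT}}$ align, (ii) the dynamic weighting emphasizes the more discriminative map while preserving the common region, and (iii) the training objective $\min_\theta\|A^G_\theta - A^T\|_2^2$ therefore concentrates perturbations on jointly vulnerable areas. There is no monotone response model $\phi_M$, no alignment functional $\rho_M$, no decomposition over $\Omega$ versus exclusive supports, and no discussion of when the inequality might degenerate or fail. Your introduction of these objects, and especially your candid observation that a naive proportional-allocation model does \emph{not} give the right sign without an additional concentration-on-$\Omega$ assumption, is considerably more careful than what the paper offers. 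What your approach buys is an explicit list of the assumptions under which the claim becomes a theorem rather than a slogan; what the paper's approach buys is brevity at the cost of rigor. If you were to submit your version, you would not be deviating from the authors' argument so much as supplying the scaffolding they omitted.
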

\begin{proof}
The theorem stems from two key observations, (1) The fused attention map $A^T(x)$ identifies intersection regions where both architectures exhibit high sensitivity (where $\nabla_x \mathcal{L}_{\text{CNN}}$ and $\nabla_x \mathcal{L}_{\text{ViT}}$ align); (2) The Frobenius-norm based weighting automatically emphasizes the more discriminative attention map for each input, while preserving common vulnerable regions.  
Through the diffusion model's training objective $\min_\theta \|A^G_\theta(x) - A^T(x)\|_2^2$, the generator learns to concentrate perturbations in areas where CNN gradients and ViT self-attention jointly indicate high vulnerability. This dual emphasis creates perturbations that simultaneously corrupt both low-level CNN features and high-level ViT patch relationships, leading to improved transferability compared to single-architecture attacks.
\end{proof}

The joint attention map thus serves as a valuable supervisory signal, effectively guiding the latent diffusion adversarial perturbation generation toward regions universally important across distinct model architectures, significantly improving the robustness and transferability of generated adversarial examples.

\subsection{Latent Diffusion Generator Architecture}
Latent diffusion models (LDM), exemplified by Stable Diffusion\cite{podell2023sdxlimprovinglatentdiffusion,Robin2021}, have achieved state-of-the-art performance in image generation tasks due to their powerful ability to model complex data distributions.
The core of JAD is an LDM generator $G_\theta$ that learns to produce adversarial perturbations in the latent space of images. We observed that adversarial examples generated by white-box attacks show minimal visual differences from the original clean samples in pixel space, but exhibit much more pronounced differences after being encoded into the latent space. Moreover, operating in the latent space (rather than pixel space) not only enhances the fidelity of generated images but also improves computational efficiency. Previous CDMA methods trained their models based on traditional diffusion architectures, requiring 2,000 sampling steps per image—leading to extremely slow training and high computational costs. In contrast, our use of LDM achieves excellent results with only 50 sampling steps, resulting in significant improvements in both speed and computational efficiency.

{\bf{UNet2DwithAttention Architecture:}} Our generator $G_\theta$ is constructed upon the LDM backbone, in which a UNet2DwithAttention module is employed to enhance adversarial example generation in latent space. The overall architecture consists of multi-scale convolutional layers arranged in downsampling and upsampling paths, with skip connections to facilitate the preservation of low-level details. To improve the modeling of long-range dependencies and semantic context, we incorporate multi-head self-attention blocks at the bottleneck and select intermediate layers of the U-Net. These attention modules are automatically registered via forward hooks, enabling the extraction of multi-layer and multi-scale attention features during forward propagation.

During training, to enhance both the effectiveness and stealthiness of the attack, our generator leverages attention maps extracted from convolutional and self-attention layers to guide the spatial distribution and magnitude of adversarial perturbations. Specifically, we introduce an attention-guided loss term into the objective function, which encourages the model to concentrate perturbations in regions with high attention responses. This is achieved by minimizing the weighted product of the perturbation magnitude and the attention mask, thereby effectively constraining adversarial modifications to semantically salient regions while maintaining imperceptibility.

{\bf{Diffusion-based Training:}} The generator $G_\theta$ is trained as a denoising diffusion probabilistic model to translate adversarial examples back towards the clean image manifold. At training time, we treat the adversarial latent $z^{adv}=E(x^{adv})$ as a noisy version of the clean latent $z=E(x)$. A forward diffusion process is defined that gradually adds Gaussian noise to $z^{adv}$ over $T$ steps. At an arbitrary diffusion timestep $t$, the latent is:
\begin{equation}
\label{eq12}
z_t^{adv}=\sqrt{\bar{\alpha}_t}z^{adv}+\sqrt{1-\bar{\alpha}_t}\epsilon, 
\end{equation}
where $\epsilon \sim \mathcal{N}(0,I)$ is sampled noise, $z_t$ denotes the result of the forward diffusion process at step $t$ and $\bar{\alpha}_t$ is the scheduled signal strength. The diffusion U-Net $G_\theta$ is then tasked to predict and remove this noise. We optimize the standard denoising objective for $G_\theta$:
\begin{equation}
\label{eq7}
\mathcal{L}_{denoise}=\mathbb{E}_{x,x^{adv},t,\epsilon}\left[\left\|\hat{\epsilon}_\theta(z_t^{adv},t)-\epsilon\right\|_2^2\right],
\end{equation}
where $\epsilon_\theta(z^{adv}_t,t)$ is the noise predicted by the UNet for the input $z^{adv}_t$ at time $t$. This loss trains the generator to reverse the adversarial perturbation, i.e., to denoise $z^{adv}$ toward the clean latent $z$. Intuitively, the diffusion model learns to remove adversarial noise at each step, which later allows us to run the reverse process (adding noise) to generate adversarial perturbations. The latent diffusion framework provides a powerful generative prior for producing realistic-looking outputs, as it maintains a balance between complexity reduction and detail preservation in the learned representation.

\subsection{Joint Attention Distillation}
\begin{figure}[!tb]
	\centering
	\includegraphics[width=1\linewidth]{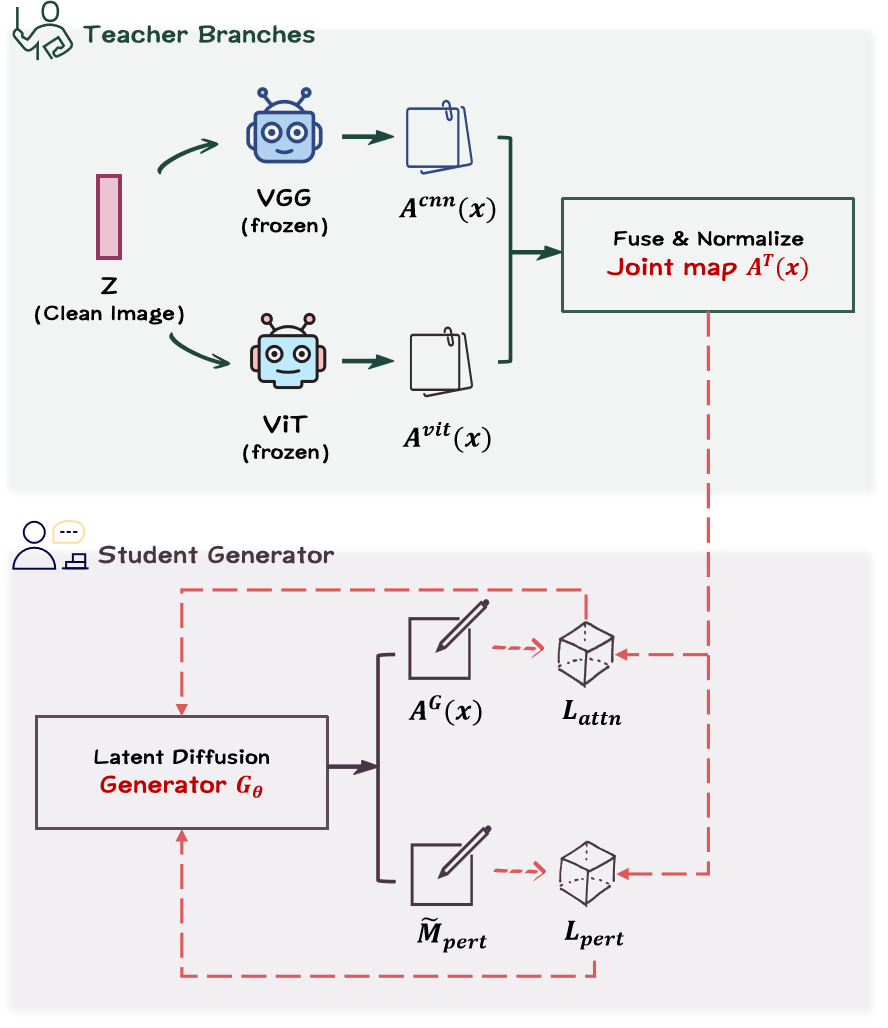}
	\caption{Detailed attention–guided training pipeline of \textbf{JAD}. Joint teacher attention $A^{T}$ supervises both the generator attention $A^{G}$ (via $\mathcal{L}_{\text{attn}}$) and the perturbation distribution $\tilde{M}_{\text{pert}}$ (via $\mathcal{L}_{\text{pert}}$), enabling cross-architecture transferable adversarial generation. In the figure, solid arrows represent the data flow, while red dashed arrows indicate the supervision / gradient flow.}
    \label{fig:attn_pipeline}
    \vspace{-0.4cm}
\end{figure}
While the diffusion model learns to denoise adversarial inputs, we guide its internal attention to focus on the most classification-critical regions identified by the teachers. We introduce a {\bf{Joint Attention Distillation (JAD)}} mechanism to align the U-Net’s attention maps with the joint teacher attention $A^T$. To make the preceding intuition concrete, Fig.\ref{fig:attn_pipeline} zooms into the training core of JAD. 

During a training forward pass, the generator produces latent feature maps and corresponding attention weights. Let $A^G$ denote the generator’s attention map (we take the final self-attention layer’s output and aggregate across heads to form a single spatial map). $A^G$ is upsampled and normalized to the same shape as $A^T$ for comparison. We then define an attention distillation loss $\mathcal{L}_{attn}$ that penalizes discrepancies between $A^G$ and $A^T$. Specifically, we use a cosine similarity measure to encourage alignment:
\begin{equation}
\label{eq8}
\mathcal{L}_{attn}=1-\frac{\langle A^G,A^T\rangle}{\|A^G\|_2\|A^T\|_2},
\end{equation}
where $\langle\cdot,\cdot\rangle$ denotes the dot product over all spatial locations.

\begin{theorem}
\label{theo2}
    Let $A^G$ denote the attention map of the student generator $G$ and $A^T$ denote the joint teacher attention map (as defined in Theorem~\ref{theo1}). By minimizing the loss function:  
\[
\mathcal{L}_{\text{attn}} = 1 - \frac{\langle A^G, A^T \rangle}{\|A^G\|_2 \|A^T\|_2},  
\]  
The student model learns the common sensitive subspace of the teacher's attention. Specifically, when $\mathcal{L}_{\text{attn}} \to 0$, we have:  
\begin{itemize}
    \item  $A^G$ becomes parallel to $A^T$ in the feature space, i.e., \(\exists \lambda > 0\) such that \(A^G = \lambda A^T + \epsilon\) with \(\|\epsilon\|_2 = o(\lambda)\). 
    \item The intensity distribution of the student's perturbation \(\delta\) is positively correlated with the common sensitive regions:  
   \[
   \|\delta\|_{S_{\text{joint}}} \propto \left\| A^T \right\|_{S_{\text{joint}}},  
   \]  
   where $S_{\text{joint}} = \text{supp}(A^{\text{cnn}}) \cap \text{supp}(A^{\text{vit}})$ denotes the common sensitive region.  
\end{itemize}
 
\end{theorem}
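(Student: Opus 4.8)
The plan is to separate the claim into its two bullets: the first is a statement purely about the geometry of the cosine loss, while the second couples that geometric alignment to the realized perturbation through the auxiliary loss $\mathcal{L}_{\text{pert}}$ of the training pipeline.

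For the first bullet I would flatten the spatial maps into Euclidean vectors and set $c = \langle A^G, A^T\rangle/(\|A^G\|_2\|A^T\|_2)$, so that $\mathcal{L}_{\text{attn}} = 1 - c$. Cauchy--Schwarz gives $c \le 1$ with equality exactly when $A^G$ and $A^T$ are positively collinear, which identifies the limiting direction. For the quantitative remainder I take the orthogonal decomposition $A^G = \lambda A^T + \epsilon$ with $\lambda = \langle A^G, A^T\rangle/\|A^T\|_2^2$ and $\epsilon \perp A^T$; a direct computation gives $\|\epsilon\|_2^2 = \|A^G\|_2^2(1 - c^2)$ and $\lambda = c\,\|A^G\|_2/\|A^T\|_2$, so that $\|\epsilon\|_2/\lambda = (\|A^T\|_2/c)\sqrt{1 - c^2}$. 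Because $1 - c = \mathcal{L}_{\text{attn}} \to 0$ forces $c \to 1$, this ratio vanishes, yielding $\|\epsilon\|_2 = o(\lambda)$ under the mild nondegeneracy assumption that $\|A^G\|_2$ stays bounded away from zero, which I would state explicitly.

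For the second bullet I would invoke the perturbation loss $\mathcal{L}_{\text{pert}}$ of Fig.~\ref{fig:attn_pipeline}, which supervises the normalized perturbation-magnitude map $\tilde{M}_{\text{pert}}(x) = |\delta(x)|/\|\delta\|_2$ against the generator attention. Driving $\mathcal{L}_{\text{pert}} \to 0$ forces $\tilde{M}_{\text{pert}} \to A^G/\|A^G\|_2$ pointwise, and substituting the first bullet gives $\tilde{M}_{\text{pert}} \to A^T/\|A^T\|_2$ up to an $o(1)$ residual. Restricting to $S_{\text{joint}} = \text{supp}(A^{\text{cnn}}) \cap \text{supp}(A^{\text{vit}})$, I note that both normalized maps are strictly positive there, so the convex-combination form of Eq.~(\ref{eq_joint_attention}) with positive weights $w^{\text{cnn}}(x), w^{\text{vit}}(x)$ makes $A^T > 0$ pointwise on $S_{\text{joint}}$. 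Taking the restricted norm of $|\delta(x)| = \|\delta\|_2\,\tilde{M}_{\text{pert}}(x)$ over $S_{\text{joint}}$ then yields $\|\delta\|_{S_{\text{joint}}} = (\|\delta\|_2/\|A^T\|_2)\,\|A^T\|_{S_{\text{joint}}} + o(1)$, i.e. the asserted positive proportionality with constant $\|\delta\|_2/\|A^T\|_2$.

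The main obstacle is the coupling between the internal attention $A^G$ and the externally observed perturbation $\delta$: the theorem tacitly assumes that aligning the generator's attention reshapes its output, yet $A^G$ is a latent feature whereas $\delta$ is produced through the full decoder. Making the second bullet rigorous therefore requires positing that the map from attention weights to output magnitude is locally monotone (or Lipschitz), so that $\mathcal{L}_{\text{pert}} \to 0$ genuinely enforces $\tilde{M}_{\text{pert}} \propto A^G$ rather than merely driving a surrogate to zero. I would make this regularity hypothesis explicit and emphasize that the proportionality is asymptotic, valid up to the $o(1)$ and $o(\lambda)$ terms inherited from the cosine alignment, rather than an exact pointwise identity.
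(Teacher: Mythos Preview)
Your treatment of the first bullet is essentially the paper's Step~1, but sharper: the paper simply writes $\mathcal{L}_{\text{attn}}=1-\cos\theta$ and asserts $\theta\to0\Rightarrow A^G=\lambda A^T+\varepsilon$ with $\lambda=\|A^G\|_2/\|A^T\|_2$, whereas you make the orthogonal projection explicit and actually compute $\|\epsilon\|_2/\lambda=(\|A^T\|_2/c)\sqrt{1-c^2}\to0$, which is the cleaner way to justify the $o(\lambda)$ claim.

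For the second bullet you diverge from the paper in two ways. First, a factual slip: $\mathcal{L}_{\text{pert}}$ in Eq.~(\ref{eq9}) aligns $\tilde{M}_{\text{pert}}$ against the \emph{teacher} map $\tilde{A}^T$, not against $A^G$; if you route through $\mathcal{L}_{\text{pert}}$ correctly you get $\tilde{M}_{\text{pert}}\to\tilde{A}^T$ directly and the first bullet is not even needed for this step. Second, and more substantively, the paper does not invoke $\mathcal{L}_{\text{pert}}$ at all here. Its Step~3 simply posits the generator-level relation $\delta=G(z\mid A^G)\Rightarrow\|\delta(x)\|_2\propto\|A^G(x)\|_2$ as a modeling assumption, then chains it with Step~1 to reach $\|\delta\|_S\propto\|A^T\|_S$ for any region $S$. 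Your route grounds the coupling in an explicit training objective (and you honestly flag the Lipschitz/monotonicity hypothesis needed to make that rigorous), whereas the paper's route is a bare proportionality postulate about the conditional generator. The paper also inserts an intermediate Step~2 that decomposes $A^T=A_{\text{joint}}+A_{\text{unique}}$ and argues, via the dynamic weighting of Theorem~\ref{theo1}, that $\|A_{\text{joint}}\|_F^2\ge\|A_{\text{unique}}\|_F^2$ so the aligned student attention is dominated by the common component; you bypass this by noting $A^T>0$ pointwise on $S_{\text{joint}}$, which suffices for the proportionality statement as written but loses the paper's (heuristic) claim that the joint part carries most of the energy.
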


\begin{proof}
\textbf{Step 1:} The cosine similarity is defined as:  
\[
\text{sim}(A^G, A^T) = \frac{\langle A^G, A^T \rangle}{\|A^G\|_2 \|A^T\|_2} = \cos \theta,  
\]  
where \(\theta\) is the angle between \(A^G\) and \(A^T\). The loss function is equivalent to:  
\[
\mathcal{L}_{\text{attn}} = 1 - \cos \theta.  
\]  
Minimizing \(\mathcal{L}_{\text{attn}}\) maximizes \(\cos \theta\), forcing \(\theta \to 0\). By vector space theory:  
\[
\theta \to 0 \implies A^G = \lambda A^T + \varepsilon, \quad \lambda = \frac{\|A^G\|_2}{\|A^T\|_2}, \quad \|\varepsilon\|_2 \ll \lambda.  
\]  
This indicates that \(A^G\) and \(A^T\) are asymptotically aligned in direction (\(\varepsilon\) is a higher-order error term).  

\textbf{Step 2:} From Theorem~\ref{theo1}, the teacher's attention can be decomposed as:  
\[
A^T = w_{\text{vgg}} A^{\text{cnn}} + w_{\text{vit}} A^{\text{vit}} = A_{\text{joint}} + A_{\text{unique}},  
\]  
where \(A_{\text{joint}}\) is the common sensitive component non-zero on \(S_{\text{joint}}\), \(A_{\text{unique}}\) is the component non-zero only on regions unique to a single model.  

Minimizing \(\mathcal{L}_{\text{attn}}\) is equivalent to maximizing \(\langle A^G, A^T \rangle\). By the Cauchy-Schwarz inequality:  
\[
\langle A^G, A^T \rangle \leq \|A^G\|_2 \|A^T\|_2,  
\]  
with equality if and only if \(A^G\) and \(A^T\) are linearly dependent. At this point, the projection of the student's attention onto the \(A_{\text{unique}}\) direction is:  
\[
\langle A^G, A_{\text{unique}} \rangle = \lambda \langle A^T, A_{\text{unique}} \rangle.  
\]  
However, by the dynamic weighting design (see Theorem~\ref{theo1}), the energy of \(A^T\) is concentrated on the common sensitive regions:  
\[
\|A_{\text{joint}}\|_F^2 \geq \|A_{\text{unique}}\|_F^2.  
\]  
Thus, the student's attention is forced to align with the common sensitive component \(A_{\text{joint}}\).  

\textbf{Step 3:} The perturbation \(\delta\) generated by the diffusion model satisfies:  
\[
\delta = G(z | A^G) \implies \|\delta(x)\|_2 \propto \|A^G(x)\|_2.  
\]  
Combined with the conclusion from Step 1 (\(A^G \approx \lambda A^T\)), we obtain:  
\[
\|\delta(x)\|_{S} \propto \|A^T(x)\|_{S}, \quad \forall S \subseteq \text{Image Domain}.  
\]  
In particular, on the common sensitive region \(S_{\text{joint}}\):  
\[
\|\delta\|_{S_{\text{joint}}} \propto \left\| w_{\text{vgg}} A^{\text{cnn}} + w_{\text{vit}} A^{\text{vit}} \right\|_{S_{\text{joint}}}.  
\]  
By the optimality of dynamic weights, this allocation maximizes the perturbation intensity in the common sensitive regions.
\end{proof}

Therefore, minimizing $\mathcal{L}_{attn}$ drives $A^G$ to be as parallel as possible to $A^T$, i.e., to highlight the same regions. This effectively distills the joint CNN–ViT attention knowledge into the diffusion model. By training $G_\theta$ to attend to the fused important regions identified by both architectures, we ensure that the generated perturbations target features that are broadly significant for recognition (not just biases of a single model). As a result, the attention-aligned generator is more likely to produce adversarial patterns that transfer across different network types.
\subsection{Perturbation–Attention Alignment}
In addition to aligning the model’s internal focus, we explicitly align the output perturbations with the salient regions. The idea is to concentrate the adversarial noise where $A^T$ is high (important or sensitive to the models), and avoid perturbing where $A^T$ is low. We define a perturbation-attention alignment loss $\mathcal{L}_{pert}$ to quantitatively enforce this overlap. At each training step, we compute the latent perturbation map $M_{\text{pert}}$ as the absolute difference between the predicted noise and the true noise added by the diffusion scheduler: \[M_{\text{pert}}(i,j) = |\hat{\epsilon}(i,j) - \epsilon(i,j)|\]This map $M_{pert}$ quantifies the perturbation introduced by the generator in the latent space, capturing how each latent dimension has been modified by the adversarial noise. We normalize $M_{pert}$ to $\tilde{M}_{pert}$ so that $\sum_{i,j}\tilde{M}_{pert}(i,j)=1$ (treating it as a spatial probability distribution of perturbation). Similarly, the teacher attention map is normalized to $\tilde{A}^T$ with $\sum \tilde{A}^T=1$. We then define $\mathcal{L}_{pert}$ as one minus the statistical overlap (Bhattacharyya coefficient) between these two distributions:
\begin{equation}
\label{eq9}
\mathcal{L}_{pert}=1-\sum_{i,j}\tilde{M}_{pert}(i,j)\tilde{A}^{T}(i,j),
\end{equation}
Minimizing $\mathcal{L}_{pert}$ encourages $\tilde{M}_{pert}$ and $\tilde{A}^T$ to have high overlap, meaning the perturbation is concentrated on the same regions where $A^T$ places most emphasis. In practice, this guides $G_\theta$ to allocate its generated adversarial noise in the joint attention regions (e.g., object parts critical to both CNN and ViT), rather than dispersing noise arbitrarily. By maximizing the perturbation-attention alignment, the attack strength is focused on the most vulnerable features, boosting the transferability of the adversarial examples.
\subsection{Region-wise Reconstruction Loss}
While the above losses encourage perturbations to concentrate in semantically important regions, we also impose a constraint to preserve the fidelity of the rest of the image. Specifically, we introduce a region-wise latent reconstruction loss $\mathcal{L}_{\mathrm{reg}}$ to penalize changes in low-attention areas at the latent level.

Given the joint attention map $A^T$ (resized to the latent feature map resolution), we define the complementary mask $W(i, j) = 1 - A^T(i, j)$, which assigns higher weights to less important or background regions. Let $z_{clean}$ and $z_{adv}$ denote the clean and adversarial latent codes, respectively (where $z_{clean} = E(x)$, $z_{adv} = E(x^{adv})$).

We enforce that $z_{adv}$ remains similar to $z_{clean}$ in regions where $W$ is large (i.e., where attention is low). The region-wise latent reconstruction loss is thus:
\begin{equation}
\label{eq10}
\mathcal{L}_{region}=\left\|(z^{adv}-z)\odot M\right\|_1,
\end{equation}
where $\odot$ denotes element-wise multiplication and the $L_1$-norm is computed over all latent features. By minimizing $\mathcal{L}_{\mathrm{reg}}$, the generator is discouraged from making changes to regions not emphasized by the attention maps, thereby maintaining perceptual quality and keeping the adversarial changes subtle and focused. In summary, $\mathcal{L}_{\mathrm{reg}}$ acts as a content preservation loss for non-salient regions in the latent space, ensuring that adversarial examples remain visually indistinguishable from the originals except in semantically salient areas.
\subsection{Overall Loss and Inference}
The total training objective for JAD combines the diffusion reconstruction term with the above alignment losses. Let $\lambda_{attn}$, $\lambda_{pert}$, and $\lambda_{reg}$ be hyperparameters to weight the importance of each component. The overall loss is:
\begin{equation}
\begin{aligned}
\label{eq11}
\mathcal{L}_{total} = \mathcal{L}_{denoise}
+ \lambda_{attn}\mathcal{L}_{attn}
+ \lambda_{pert}\mathcal{L}_{pert}  + \lambda_{reg}\mathcal{L}_{reg}.
\end{aligned}
\end{equation}
By minimizing $\mathcal{L}_{total}$ over the training set of adversarial pairs, the diffusion model $G_\theta$ learns to remove adversarial noise while aligning with cross-model attention guidance. Importantly, this training is done using only the surrogate models (VGG-16 and ViT) and their derived attention – no knowledge of the eventual black-box target models is required at training time. The outcome is a generator that has internalized a strategy for attacking salient features that generalize across different network architectures.

\textbf{Inference (Black-box Attack Generation):}
During inference, we conduct a query-based black-box attack on the victim model, without access to its internal parameters or gradients. For each input image, we first encode it into a latent representation and then initialize a standard Gaussian latent variable $z_T \sim \mathcal{N}(0, I)$ at the starting diffusion timestep $T$. At each subsequent reverse diffusion step $t$, the generator $G_\theta$ produces multiple candidate latent states $z_{t-1}$ by reversing the diffusion process.

To optimize the adversarial efficacy, we sample several candidate latents at every step, decode each via the VAE, and directly apply $L_\infty$-bounded perturbations to the latent variables, ensuring that all generated adversarial examples remain within the specified norm constraints. The adversarial margin loss for each candidate is evaluated by querying the black-box victim model, and the candidate yielding the maximal loss is chosen for the next iteration.

After identifying the best candidate, we further refine the latent code through a coordinate-wise greedy search: each channel of the latent vector is iteratively perturbed in both positive and negative directions, and only the perturbation that maximizes the adversarial loss (while still adhering to the $L_\infty$ constraint) is retained. This process incrementally steers the latent variable towards directions that increase the attack success probability, leveraging only the decision feedback from the victim model.

Upon completion of all diffusion steps, the final adversarial latent variable $\hat{z}^{adv}$ is decoded into the adversarial image $\hat{x}^{adv} = D(\hat{z}^{adv})$ using the VAE decoder. In this revised framework, no attention guidance is introduced during inference; all perturbations are unconstrained by surrogate attention regions and are distributed across the entire image space. This approach allows the attack to explore a broader range of possible perturbation directions, potentially yielding higher attack success rates, especially when the surrogate and victim model architectures are heterogeneous.

To provide a comprehensive understanding of the proposed JAD framework, we present the detailed procedural flow of its key components in the following algorithms. Algorithm \ref{alg:data_preparation} outlines the data preparation phase, including the generation of adversarial samples and the extraction of joint attention maps from both CNN and ViT models. Algorithm \ref{alg:training} delineates the training process of the latent diffusion generator with joint attention distillation, encompassing the formulation of key loss functions and the integration of attention-guided perturbation alignment. Lastly, Algorithm \ref{alg:blackbox_no_attn} demonstrates the inference phase, detailing the reverse diffusion sampling procedure for generating adversarial samples in the latent space. The inclusion of these algorithmic structures aims to facilitate a clearer comprehension of JAD’s workflow, enhancing the reproducibility and interpretability of our proposed method.
\begin{algorithm}[H]
\caption{Data Preparation and Joint Attention Extraction}
\label{alg:data_preparation}
\begin{algorithmic}[1]
\REQUIRE Clean dataset $\mathcal{X}$, surrogate models (VGG, ViT), attack method
\ENSURE Paired data $(x, x^{adv})$, joint attention maps $A^T(x)$
\FOR{each image $x \in \mathcal{X}$}
    \STATE Generate adversarial samples $x^{adv}_{cnn}$, $x^{adv}_{vit}$
    \STATE Extract multi-layer attention: $\{A^{cnn}_l(x)\}$, $\{A^{vit}_l(x)\}$
    \STATE Assign depth-based weights $\{w^{cnn}_l\}_{l=1}^{L_{cnn}}$, $\{w^{vit}_l\}_{l=1}^{L_{vit}}$
    \STATE Compute weighted attention maps with Eq.~\ref{equ:attention}
    \STATE Compute joint attention map with Eq.~\ref{eq_joint_attention}
    \STATE Store $(x, x^{adv})$, $A^T(x)$
\ENDFOR
\RETURN Paired dataset and joint attention maps
\end{algorithmic}
\end{algorithm}

\begin{algorithm}[H]
\caption{Latent Diffusion Training with Joint Attention Distillation}
\label{alg:training}
\begin{algorithmic}[1]
\REQUIRE Paired data $(x, x^{adv})$, attention map $A^T(x)$, generator $G_\theta$, VAE $(E, D)$
\ENSURE Trained generator $G_\theta$
\FOR{each epoch}
    \FOR{each batch $(x, x^{adv})$}
        \STATE Encode latents $z, z^{adv} = E(x), E(x^{adv})$
        \STATE Construct noisy latent $z_t^{adv}$ with noise $\epsilon \sim \mathcal{N}(0, I)$
        \STATE Predict noise $\epsilon_\theta = G_\theta(z_t^{adv}, t)$
        \STATE Compute total losswith Eq.~\ref{eq11}
        \STATE Update $G_\theta$ via gradient descent
    \ENDFOR
\ENDFOR

\RETURN $G_\theta$
\end{algorithmic}
\end{algorithm}

\begin{algorithm}[H]
\caption{Latent-space Black-box Attack (without Attention Guidance)}
\label{alg:blackbox_no_attn}
\begin{algorithmic}[1]
\REQUIRE Clean image $x$, true label $y$, VAE $(E, D)$, generator $G_\theta$, victim model $\mathcal{F}_{\text{victim}}$, surrogate models $\{\mathcal{F}_i\}$
\ENSURE Adversarial image $\hat{x}^{adv}$ or failure

\STATE Encode latent: $z = E(x)$; initialize $z_t = z + \mathcal{N}(0, \sigma^2)$
\STATE Initialize perturbation $\delta$ with $\|\delta\|_\infty \leq \epsilon$

\FOR{$q = 1$ to $Q_{\max}$}
    \STATE Estimate gradient via coordinate-wise greedy search using surrogate models
    \STATE Update $\delta$ with momentum and project to $L_\infty(\epsilon)$
    \STATE Generate $\hat{x}^{adv} = D(z_t + \delta)$
    \IF{$\mathcal{F}_{\text{victim}}(\hat{x}^{adv}) \neq y$}
        \STATE \textbf{Return} $\hat{x}^{adv}$
    \ENDIF
\ENDFOR
\RETURN failure
\end{algorithmic}
\end{algorithm}

\section{Experiments}
\subsection{Experimental Settings}
\subsubsection{Implementation}
We set the attack to terminate once the adversarial example causes the target model to misclassify. For each attack, we constrain the perturbation with an $L_\infty$ norm and set the $\epsilon=16/255$. When training the JAD generator, we configure the diffusion steps to T=50, train for 200 epochs with a batch size of 32. All experiments were conducted on RTX 4090 GPUs.

\subsubsection{Datasets and attack baselines}
Following previous works, we select three benchmark datasets for computer vision tasks, CIFAR-10\cite{2009Learning}, CIFAR-100\cite{2009Learning}, and mini-ImageNet—to conduct our experiments. Specifically, CIFAR-10 contains images with dimensions of 
$32\times32\times3$ in 10 categories, while CIFAR-100 comprises images of the same size spanning 100 categories. The mini-ImageNet dataset, a subset derived from ImageNet\cite{5206848}, includes 1000 categories, and, following standard practice, each image in this dataset is resized to $224\times224\times3$.

We adopt four state-of-the-art black-box adversarial attack methods as baselines, covering score-based, decision-based, and query-and-transfer-based categories. These methods include RayS, MCG-Attack, CDMA, and ADBA\cite{DBLP:journals/corr/abs-2406-04998}. We replicate these attacks using the default configurations provided by the publicly available code from their respective original papers.
\subsubsection{Models}
We trained several common deep neural network models, including VGG\cite{simonyan2014very}, Inception\cite{szegedy2016rethinking}, ResNet\cite{he2016deep}, and DenseNet\cite{huang2017densely}, as well as several popular Vision Transformer (ViT)-based architectures, such as ViT-base, ViT-small, Swin-T\cite{liu2021swin}, and DeiT-base\cite{touvron2021training}.

\begin{table*}[!h]
\centering
\caption{Experimental results on ASR and the query counts on Cifar-10.\\
As the ADBA method does not involve targeted attacks, we denote its results for targeted attacks with a “/”.\\
The best results are in \textbf{bold}. The second-best results are $ \underline{\text{underlined}}$.}
\label{table:attack_comparison_cifar10}
\renewcommand{\arraystretch}{2}
\fontsize{32}{30}\selectfont
\resizebox{\linewidth}{!}{
\begin{tabular}{c|c|ccc|ccc|ccc|ccc|ccc|ccc}
\hline
\multirow{2}{*}{} &\multirow{2}{*}{\textbf{Methods}}  
& \multicolumn{3}{c|}{\textbf{ResNet-50}} 
& \multicolumn{3}{c|}{\textbf{DenseNet-169}} 
& \multicolumn{3}{c|}{\textbf{Inception-V3}} 
& \multicolumn{3}{c|}{\textbf{Swin-T}} 
& \multicolumn{3}{c|}{\textbf{ViT-S}} 
&\multicolumn{3}{c}{\textbf{Deit-b}} \\  
&   & ASR & Avg.Q & Med.Q & ASR & Avg.Q & Med.Q & ASR & Avg.Q & Med.Q  & ASR & Avg.Q & Med.Q & ASR & Avg.Q & Med.Q & ASR & Avg.Q & Med.Q\\ 
\hline
\multirow{5}{*}{\rotatebox{90}{\textbf{untargeted}}}
& RayS & 93.23\% & 309.13 & 255.00 & 94.08\% & 296.39 & 246.00 & 93.39\% & 301.58 & 233.00 &\underline{96.22\%}  &311.41 &275.00 &\textbf{97.56\%} &309.87&268.00&89.42\%  &316.86  & 289.00  \\
& ADBA &98.74\% &84.16  &37.00  &74.01\% &100.35  &47.00 &86.55\%  &183.05  &109.00 &38.65\% &154.24 &35.00 &41.02\%&152.98 &32.00&37.96\%&163.92&154.00\\
& MCG-Attack &95.95\% & 31.53 & 1.00 & 97.28\% & 61.70 & 1.00 & 95.33\% & 67.06 & 1.00 &84.16\% &148.03 &137.00& 74.06\% &188.05 &165.00 &64.29\% &196.63 &185.00\\
& CDMA & \underline{99.68\%} &\underline{4.08} &\underline{1.00} &\underline{99.12\%}&\underline{6.54} &\underline{1.00}&\textbf{99.67\%} &\textbf{3.61} &1.00&94.08\% &\textbf{6.27} &\underline{1.00} &92.07\% &\textbf{4.04}&\underline{1.00} &\underline{90.59\%} &\underline{20.20} &\underline{1.00} \\
& Ours &\textbf{99.79\%}&\textbf{2.18}&\textbf{1.00}&\textbf{99.22\%}&\textbf{5.32}&\textbf{1.00}&\underline{98.53\%}&\underline{6.01}&\underline{1.00}&\textbf{97.17\%}&\underline{8.21}&\textbf{1.00}&\underline{95.24\%}&\underline{9.14}&\textbf{1.00} &\textbf{93.57\%} &\textbf{10.26} &\textbf{1.00}\\
\hline
\multirow{5}{*}{\rotatebox{90}{\textbf{targeted}}}
& RayS &  8.64\% &354.18&336.00&8.52\% &395.28&388.00&9.65\% &331.25&329.00&5.23\%&421.36&413.00&5.37\% &456.21 &420.00&4.26\%  &472.94&435.00\\
& ADBA & / & /  & /  &/ & /  & / & /  & /  &/  &/ &/ &/ &/ &/ &/ &/ &/ &/ \\
& MCG-Attack &69.18\% &375.42&276 &67.59\% &361.59&298&70.54\% &386.59&274 &23.26\%&339.35&328.00&16.85\%&339.63&6.00&15.79\%&341.96&6.00\\
& CDMA &\underline{83.56\%}  &\underline{26.15} &\underline{1.00} &\underline{87.43\%} &\underline{23.51} &\underline{1.00} &\underline{80.21\%} &\underline{26.79} &\underline{1.00} &\underline{80.14\%} &\textbf{24.32} &\underline{1.00} &\underline{61.21\% } &\textbf{28.32} &\underline{1.00} &\underline{64.90\%} &\textbf{37.28} &\underline{1.00}\\
& Ours &\textbf{83.71\%}&\textbf{12.32}&\textbf{1.00}&\textbf{89.52\%} &\textbf{10.92} &\textbf{1.00} &\textbf{83.55\%} &\textbf{16.91} &\textbf{1.00} &\textbf{82.57\%} &\underline{33.33} &\textbf{1.00} &\textbf{80.72\%} &\underline{37.06} &\textbf{1.00} &\textbf{78.33\%} &\underline{45.56} &\textbf{1.00}\\
\hline
\end{tabular}}
\end{table*}

\begin{table*}[!h]
\centering
\caption{Experimental results on ASR and the query counts on Cifar-100.\\
As the ADBA method does not involve targeted attacks, we denote its results for targeted attacks with a “/”.\\
The best results are in \textbf{bold}. The second-best results are $ \underline{\text{underlined}}$.}
\label{table:attack_comparison_cifar100}
\renewcommand{\arraystretch}{2}
\fontsize{32}{30}\selectfont
\resizebox{\linewidth}{!}{
\begin{tabular}{c|c|ccc|ccc|ccc|ccc|ccc|ccc}
\hline
\multirow{2}{*}{} &\multirow{2}{*}{\textbf{Methods}}  
& \multicolumn{3}{c|}{\textbf{ResNet-50}} 
& \multicolumn{3}{c|}{\textbf{DenseNet-169}} 
& \multicolumn{3}{c|}{\textbf{Inception-V3}} 
& \multicolumn{3}{c|}{\textbf{Swin-T}} 
& \multicolumn{3}{c|}{\textbf{ViT-S}} 
&\multicolumn{3}{c}{\textbf{Deit-b}} \\  
&  & ASR & Avg.Q & Med.Q & ASR & Avg.Q & Med.Q & ASR & Avg.Q & Med.Q  & ASR & Avg.Q & Med.Q & ASR & Avg.Q & Med.Q & ASR & Avg.Q & Med.Q \\ 
\hline
\multirow{5}{*}{\rotatebox{90}{\textbf{untargeted}}}
& RayS &  94.38\% &207.08   &186.57  &95.27\% &212.00 &168.15 &  94.59\% &221.94 &196.45& 91.90\%  & 267.45 & 196.45 & \underline{94.20\%} &216.51 & 194.26 &\underline{91.51\%}  & 244.38  & 201.64 \\
& ADBA &71.84\% &73.31  &13.00  &73.12\% &100.35  &47.00 &72.16\%  &68.00  &13.00 & 71.49\% & 76.34 &15.00 &71.79\% & 55.64 & 13.00 &73.124\% &61.04&15.00 \\
& MCG-Attack &95.39\% &173.39& 1.00 &96.72\% &85.73& 1.00 &89.55\% &86.41& 1.00 &84.16\%&95.61&1.00&82.45\%&98.43&1.00&81.98\%&115.94&2.00\\
& CDMA  &\textbf{98.90}\% &\underline{6.39}&\underline{1.00} &\textbf{99.13\%}&\underline{6.06} &\underline{1.00}&\underline{97.71\%}&\underline{8.76}&\underline{1.00}&\underline{94.82\%}&\underline{20.48}&\underline{2.00}&91.41\%&\underline{21.58}&\underline{2.00}&91.32\% &\underline{26.00} &\underline{2.00}\\
& Ours &\underline{97.79\%}  &\textbf{2.19} &\textbf{1.00}&\underline{97.08\%}  &\textbf{2.86} &\textbf{1.00} &\textbf{98.29\%} &\textbf{2.13}&\textbf{1.00}&\textbf{97.51\%} &\textbf{3.45}&\textbf{1.00}&\textbf{94.31\%} &\textbf{9.61}&\textbf{1.00}
&\textbf{93.17\%} &\textbf{13.61}&\textbf{1.00}\\
\hline
\multirow{5}{*}{\rotatebox{90}{\textbf{targeted}}}
& RayS &10.08\% &225.45&201.00&13.31\% &214.56 &182&12.85\% &234.17 &287&5.60\% &508.81&498.00&6.27\%&518.32&506.00&5.55\%&486.27&462.00\\
& ADBA & / & /  & /  &/ & /  & / & /  & /  &/  &/ &/ &/ &/ &/ &/ &/ &/ &/ \\
& MCG-Attack & \underline{55.26\%} &495.39&453&60.89\% &598.64&509&59.64\% &529.72  &454&15.30\% &380.82&365.00&11.0\% 4&291.45&278.00&9.07\%&346.32&323.00\\
& CDMA &32.24\% &\underline{89.96} &\underline{6.00} &\underline{67.75\%} &\underline{46.24} &\underline{1.00} &\underline{66.83\%}&\underline{51.92}&\underline{1.00}&\underline{43.71\%} &\underline{68.37}&\underline{2.00}&\underline{60.76\%}&\textbf{23.15}&\underline{2.00}&\underline{41.87\%} &\underline{72.86}&\underline{4.00}\\
& Ours &\textbf{79.65\%}&\textbf{39.27}&\textbf{1.00}&\textbf{72.98\%}&\textbf{41.78}&\textbf{1.00}&\textbf{71.86\%}&\textbf{41.56}&\textbf{1.00}&\textbf{67.81\%} &\textbf{57.15}&\textbf{1.00}&\textbf{65.32\%}&58.24&\textbf{1.00} &\textbf{64.05\%}&\textbf{60.26 }&\textbf{1.00}\\
\hline
\end{tabular}}
\end{table*}

\begin{table*}[htp]
\centering
\caption{Experimental results on ASR and the query counts on ImageNet.\\
As the ADBA method does not involve targeted attacks, we denote its results for targeted attacks with a “/”.\\
The best results are in \textbf{bold}. The second-best results are $ \underline{\text{underlined}}$.}
\label{table:attack_comparison_imgnet}
\renewcommand{\arraystretch}{2}
\fontsize{32}{30}\selectfont
\resizebox{\linewidth}{!}{
\begin{tabular}{c|c|ccc|ccc|ccc|ccc|ccc|ccc}
\hline
\multirow{2}{*}{} &\multirow{2}{*}{\textbf{Methods}}
& \multicolumn{3}{c|}{\textbf{ResNet-50}} 
& \multicolumn{3}{c|}{\textbf{DenseNet-169}} 
& \multicolumn{3}{c|}{\textbf{Inception-V3}} 
& \multicolumn{3}{c|}{\textbf{Swin-T}} 
& \multicolumn{3}{c|}{\textbf{ViT-S}} 
&\multicolumn{3}{c}{\textbf{Deit-b}} \\ 
&  & ASR & Avg.Q & Med.Q & ASR & Avg.Q & Med.Q & ASR & Avg.Q & Med.Q  & ASR & Avg.Q & Med.Q & ASR & Avg.Q & Med.Q & ASR & Avg.Q & Med.Q \\ \hline

\multirow{5}{*}{\rotatebox{90}{\textbf{untargeted}}}
&RayS & \underline{97.55\%} & 244.36 & 174.00 & 89.71\% & 327.96 & 261.50 & 91.03\% & 237.03 & 123.00 & \underline{93.88\%} & 424.75 & 405.00 &\underline{95.94\%}&424.13 &20.00& 73.35\% & 447.92 & 422.00 \\
&ADBA &77.24\%  &300.84  &343.00  &89.29\% &203.09 &143.00 &90.27\%  &212.67  &159.00 &86.47\%&288.72&260.00&69.73\% &274.79 &388.00 &\underline{89.83\%} &266.10 & 224.50\\
&MCG-Attack & 96.65\% & \underline{31.96} & \underline{1.00}  & \underline{94.56\%} & \underline{60.88} & \underline{1.00}  & \underline{92.01\%} & \underline{74.01} & \underline{1.00}  & 85.36\% & \underline{157.55} & \underline{1.00}  & 90.59\% & \underline{83.86} & \underline{1.00}  & 67.62\% & \underline{144.37} & \underline{1.00} \\
&CDMA & / & /  & /  &/ & /  & / & /  & /  &/  &/ &/ &/ &/ &/ &/ &/ &/ &/ \\ 
&Ours &\textbf{100.00\%} &\textbf{3.29} &\textbf{1.00} &\textbf{97.57\%} &\textbf{3.66} &\textbf{1.00} &\textbf{98.06\%} &\textbf{3.01} &\textbf{1.00} &\textbf{95.33\%} &\textbf{16.71} &\textbf{1.00} &\textbf{96.26\%} &\textbf{15.76} &\textbf{1.00} & \textbf{97.19\%} &\textbf{1.00} &\textbf{1.00}\\ 
\hline
\multirow{5}{*}{\rotatebox{90}{\textbf{targeted}}}
& RayS &\underline{9.52\%} &\underline{302.96} &\underline{290.00} &\underline{9.14\%} &\underline{298.41} &\underline{275.00} &12.31\% &256.31 &\underline{230.00} &\underline{3.68\%}  &\underline{356.87} &\underline{340.00} &3.23\%&364.25&\underline{341.00} &2.85\% &\underline{369.44}&\underline{342.00}\\
& ADBA & / & /  & /  &/ & /  & / & /  & /  &/  &/ &/ &/ &/ &/ &/ &/ &/ &/ \\
& MCG-Attack &  4.34\% &403.73&0.00 &6.28\% &387.65 &0.00 &\underline{28.9\%} &\underline{255.00}&0.00& 1.94\% &391.65& 0.00 &\underline{7.12\%} &\underline{314.04} &0.00 &\underline{3.53\%} &399.69 &0.00 \\
& CDMA & / & /  & /  &/ & /  & / & /  & /  &/  &/ &/ &/ &/ &/ &/ &/ &/ &/ \\
& Ours &\textbf{93.69\%} &\textbf{2.41} &\textbf{1.00} &\textbf{93.71\%} &\textbf{2.12}& \textbf{1.00}&\textbf{93.65\%} &\textbf{3.66} &\textbf{1.00 }&\textbf{57.14\%} &\textbf{27.17}&\textbf{1.00} &\textbf{48.72\%} &\textbf{15.23} &\textbf{1.00} &\textbf{47.06\%} &\textbf{11.54} &\textbf{1.00}\\
 \hline
\end{tabular}}
\end{table*}

\begin{table*}[ht]
\centering
\caption{Attack success rates (\%) of our JAD vs. state-of-the-art transformation-based attacks.\\
The best results are in \textbf{bold}. The second-best results are $ \underline{\text{underlined}}$.\\  When the surrogate model is the same as the target model, we consistently denote the result as "100*" in the table. The ATT method in the table is specifically designed for transfer from ViT models to ViT models; therefore, we use “/ ” to denote the results when the surrogate model is a CNN-based architecture.
}
\label{table:transfer_comparison} 
\resizebox{\textwidth}{!}{
\begin{tabular}{c|c|cccccc|ccc}
\hline
\textbf{Surrogate} & \textbf{Attack} & VGG19 & InceptionV3 & Densenet121 & Deit-B & Swin-T & ViT-S & AVG-all & AVG-CNN & AVG-ViT \\ 
\hline
\multirow{6}{*}{VGG16} 
& L2T   & 99.22 & \underline{85.90} & \underline{92.56} &55.48 & \underline{74.51} & 61.33 & 78.23 & 92.69 & 63.77 \\
& SIA   & 99.26 & 66.58 & 82.72 &41.75 & 64.93 & 45.93 & 66.89 & 82.92 & 50.86  \\
& ATT   & / & / & / &/ & / & / & / & / & / \\    
& SSA   &98.67 &68.95 &76.83 &39.87 &59.27 &44.20 &67.55 &81.49 & / \\ 
& MuMoDIG   &\textbf{99.72} &85.65 &\textbf{93.68} &\underline{55.98} &74.33 &\underline{61.46} &\underline{78.47} &\textbf{93.02} &\underline{63.93} \\ 
& Ours   &\underline{99.31} &\textbf{87.02} &90.55 &\textbf{76.34} &\textbf{88.79}&\textbf{78.10}&\textbf{86.69}&\underline{92.79}&\textbf{81.08}\\ 
\hline 
\multirow{6}{*}{Resnet50} 
& L2T   & 95.54 & \textbf{92.28} & 97.17 &70.23 & 80.50 & 73.31 & 84.84 & \textbf{94.99} & 74.68 \\
& SIA   & \textbf{99.46} & 66.58 & 82.72 &41.75 & 64.93 & 45.93 & 66.89 & 82.92 & 50.86  \\
& ATT   & / & / & / &/ & / & / & / & / & /  \\
& SSA   &86.62 &78.49 &89.00 &47.46 &63.93 &50.98 &69.41 &84.70 &54.13 \\
& MuMoDIG   &\underline{96.99} &\underline{92.17} &\underline{97.30} &\underline{73.39} &\underline{82.18} &\underline{77.72} &\underline{86.63} &\underline{94.94} &\underline{77.76}\\ 
& Ours   &92.11 &86.84 &\textbf{100} &\textbf{89.47} &\textbf{97.37} &\textbf{86.84} &\textbf{92.11} &92.98  &\textbf{91.23}  \\ 
\hline
\multirow{6}{*}{Densent121} 
& L2T   & \underline{96.66} & 93.75 & \textbf{100*} &74.38 & 83.71 & 76.80 & \underline{87.54} & \underline{96.77} & 78.30 \\
& SIA   & 94.95 & 82.41 & \textbf{100*} &57.91 & 77.29 & 59.37 & 78.66 & 92.45 & 64.86  \\
& ATT   & / & / & / &/ & / & / & / & / & / \\    
& SSA   &92.43 &86.97 &\textbf{100*} &58.63 &76.17 &59.27 &78.91 &93.13 &64.69 \\ 
& MuMoDIG   &96.33 &\underline{94.26} &\textbf{100*} &\underline{75.38} &\underline{85.34} &\underline{76.91} &85.64&95.29 &\underline{79.21} \\ 
& Ours   &\textbf{96.97}&\textbf{94.82}&\textbf{100*}&\textbf{79.61}&\textbf{88.42}&\textbf{80.81}&\textbf{88.13} &\textbf{95.86}&\textbf{82.95}\\ 
\hline 
\multirow{6}{*}{ViT-B} 
& L2T   & \underline{85.37} & \underline{84.55} & \underline{85.83} &88.30 & 87.28 & 93.58 & 87.48 & \underline{85.25} & 89.72 \\
& SIA   & 77.42 & 71.04 & 75.12 &80.09 & 80.14 &90.75 & 79.09 & 74.53 & 83.66  \\
& ATT   & 85.14 & 79.35 & 84.43 &\underline{93.50} & \underline{90.00} & \textbf{97.68} & \underline{88.35} & 82.97 & \textbf{93.72} \\    
& SSA   &66.71 &61.20 &61.41 &63.70 &64.70 &74.69 &65.40 &63.11 &67.70 \\ 
& MuMoDIG   &76.93 &75.35 &76.78 &79.81 &78.61 &83.46 &78.49 &76.35 &80.63 \\ 
& Ours   & \textbf{92.87}&\textbf{87.47}&\textbf{87.69}&\textbf{93.73}&\textbf{90.93}&\underline{94.07}&\textbf{91.13}&\textbf{89.34}&\underline{92.91}\\ 
\hline 
\multirow{6}{*}{ViT-S} 
& L2T   & \underline{87.38} & \underline{86.18} & \textbf{87.25} &89.06 & \underline{88.89} & \textbf{100*} &\underline{87.75} &\underline{86.94} & 88.97 \\
& SIA   & 75.38 & 68.77 & 71.76 &73.29 & 76.37 & \textbf{100*} &73.11 &74.83 & 82.79  \\
& ATT   &85.04 & 77.90 & 84.04 &\textbf{91.93} &88.68 & \textbf{100*} &85.51 &82.33 &\textbf{90.30} \\    
& SSA   &60.44 &55.93 &54.86 &52.03 &56.11 &\textbf{100*} &55.84 &57.07 &54.07 \\ 
& MuMoDIG   &84.27 &83.53 &\underline{85.39} &\underline{90.36} &87.79 &\textbf{100*} &86.27 &84.40 &89.07 \\ 
& Ours   &\textbf{90.28} &\textbf{87.43} &85.06 &78.56 &\textbf{88.96} &\textbf{100*} &\textbf{88.38} &\textbf{87.59}&\underline{89.18}\\ 
\hline 
\multirow{6}{*}{Deit-B} 
& L2T   & \underline{92.43} & \textbf{89.37} & \underline{91.41} & \textbf{100*}& 91.28 & 92.71 &\underline{91.25} &\underline{91.07} & 91.44 \\
& SIA   & 89.70 & 80.32 & 87.76 & \textbf{100*}&\underline{91.44} &\textbf{93.98} &89.08 &85.93 & \underline{92.23}  \\
& ATT   &90.08 &79.81 &85.32 & \textbf{100*}&87.74 &92.45 &87.29 &85.07 &89.51 \\    
& SSA   &74.64 &67.47 &68.85 & \textbf{100*}&74.53 &79.48 &73.24 &70.32 &76.17 \\ 
& MuMoDIG   &89.93&87.46 &90.06 & \textbf{100*}&91.21 &92.79 &90.42 &89.12 &91.72 \\ 
& Ours &\textbf{93.00}&\underline{88.65} &\textbf{94.00}&\textbf{100*}&\textbf{92.00}&\underline{93.84}&\textbf{93.58}&\textbf{91.88}&\textbf{95.28}\\ 
\hline 
\end{tabular}}
\end{table*}
In the attack experiments, we selected VGG-16 and ViT-base as the shadow models for generating training data pairs for JAD. For other methods requiring a surrogate model, we consistently used VGG-16. The victim models (target models under attack) in testing included ResNet-50, DenseNet-169, Inception-v3, Swin-T, ViT-small, and DeiT-base.
\subsubsection{Data Collection}
During the training phase, JAD, like the CDMA method, requires some clean-adversarial sample pairs generated by white-box attacks on shadow models. In this paper, we chose to use the MIM method to generate the training dataset on ViT-base and VGG-16. We then used these datasets to guide the LDM in subsequent learning during the training phase.
\subsubsection{Metrics}
Following previous work, we also evaluated our black-box approach using the following metrics: Attack Success Rate (ASR) measures the attack effectiveness, while the Average and Median numbers of queries (Avg.Q and Med.Q) measure the attack efficiency.

\subsection{Comparisons with Baseline Black-box Attacks}

Tables~\ref{table:attack_comparison_imgnet},~\ref{table:attack_comparison_cifar10}, and~\ref{table:attack_comparison_cifar100} summarize the performance of our method and representative black-box attack baselines on ImageNet, CIFAR-10, and CIFAR-100, including both untargeted and targeted scenarios. All experiments were conducted under a strict perturbation constraint ($\epsilon=16/255$) and a maximum query budget of 1,000. We comprehensively evaluated attacks on three CNN-based (ResNet-50, DenseNet-169, Inception-V3) and three Transformer-based (Swin-T, ViT-S, DeiT-Base) victim models to assess transferability and robustness across architectures.

The CDMA method was not originally trained on the ImageNet dataset. When attempting to reproduce its results on ImageNet, we observed that training the diffusion model under CDMA is extremely resource-intensive, requiring approximately 1e8 training epochs and 2000 diffusion steps per image. In our experimental setup using a single NVIDIA RTX 4090 GPU, each training epoch with diffusion steps $T=2000$ requires approximately 115 minutes. Based on the original CDMA paper's requirement of 1e8 training epochs, the total training time would amount to 1.15e9 minutes (equivalent to approximately 798,000 days). Due to constraints in computational resources and experimental time, it was infeasible for us to perform such extensive training on a large-scale dataset. As a compromise, we trained CDMA on ImageNet for only 100 epochs using $T = 2000$ diffusion steps. However, the resulting performance deviated significantly from its reported results on other datasets, with classification accuracy dropping to around 30--40\%. To avoid presenting skewed or unrepresentative outcomes, we chose not to report these results in the main comparison table and marked them as “ / ”. The ADBA method was also not designed for targeted attacks; therefore, we have marked its corresponding results for targeted attack experiments with “/ ”.

On all three datasets, our approach consistently achieves state-of-the-art performance in both attack success rate (ASR) and query efficiency. Specifically, on ImageNet untargeted attacks, our method attains near-perfect ASR (above 95\% on all victim models), while requiring only a handful of queries (typically 1-3 per image), significantly outperforming all baselines. For example, on ResNet-50, our method achieves an ASR of 100.00\% with an average of only 3.29 queries, compared to 97.55\%/244.36 for RayS and 96.65\%/31.96 for MCG-Attack. This advantage holds for both CNN and Transformer targets, demonstrating the generalization ability of our approach.

For targeted attacks, which are substantially more challenging, our method maintains a remarkable lead. On ImageNet, our approach achieves targeted ASR above 93\% on all CNN victims (e.g., 93.71\% on DenseNet-169 with just 2.12 queries on average), while baseline methods such as RayS and MCG-Attack remain below 13\% ASR with hundreds of queries required.

Similar trends are observed on CIFAR-10 and CIFAR-100. For instance, on CIFAR-100, our approach achieves 97.79\% ASR on ResNet-50 in the untargeted setting, with only 2.19 queries per image, outperforming the strongest baselines (CDMA: 98.90\%/6.39, RayS: 94.38\%/207.08, MCG-Attack: 95.39\%/173.39). For targeted attacks, our method consistently improves both ASR and query efficiency, e.g., on DeiT-Base, achieving 64.05\% targeted ASR with just 60.26 queries on average, whereas all baselines fall below 42\% ASR or require substantially more queries.

It is worth noting that ADBA is not designed for targeted attacks, hence its entries are omitted for such settings. Moreover, while MCG-Attack and CDMA exhibit strong query efficiency on select models, their overall ASR and transferability remain inferior to our approach, especially on Transformer architectures.

Overall, the results conclusively demonstrate the superiority and robustness of our method for both untargeted and targeted black-box attacks across multiple datasets and architectures, establishing a new state-of-the-art for query-efficient adversarial attacks in the black-box setting.

In contrast, our proposed method achieves the reported attack performance on ImageNet after training for just 100 epochs with a much shorter diffusion length of $T = 50$, offering substantial improvements in both computational efficiency and training time. This is likely because our approach does not require training a complete diffusion model architecture from scratch. Instead, we only train a Student UNet based on the pre-trained Stable Diffusion framework, which enables significantly faster achievement of the desired results.

\subsubsection{Impact of Cross-Architecture Attention Fusion on Localization Behavior}
\begin{figure*}[htt]
\centering
\subfloat[Densenet-169]{\includegraphics[width=0.45\linewidth]{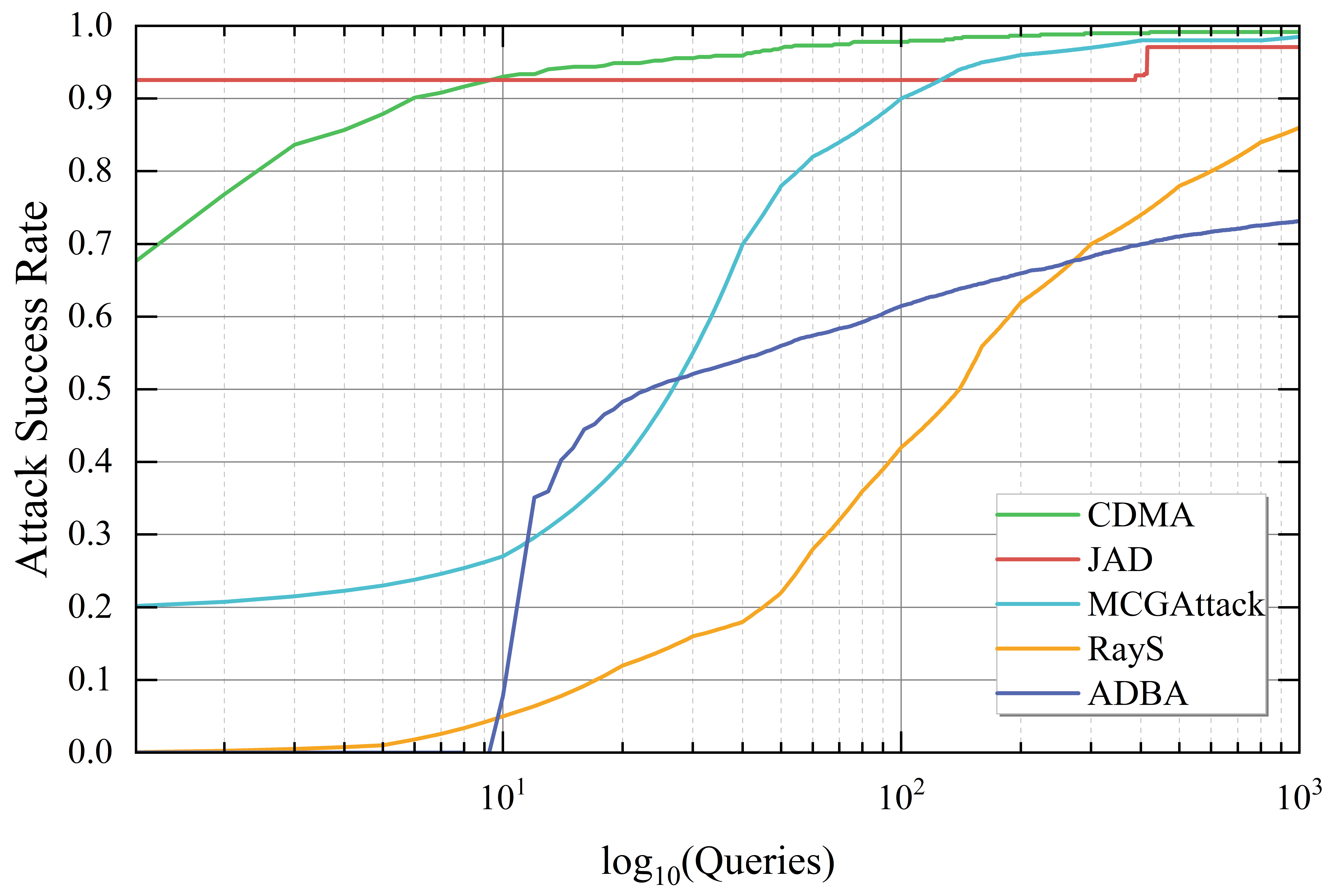}}\label{fig:densenet169}
\hfill
\subfloat[Inception-V3]{\includegraphics[width=0.45\linewidth]{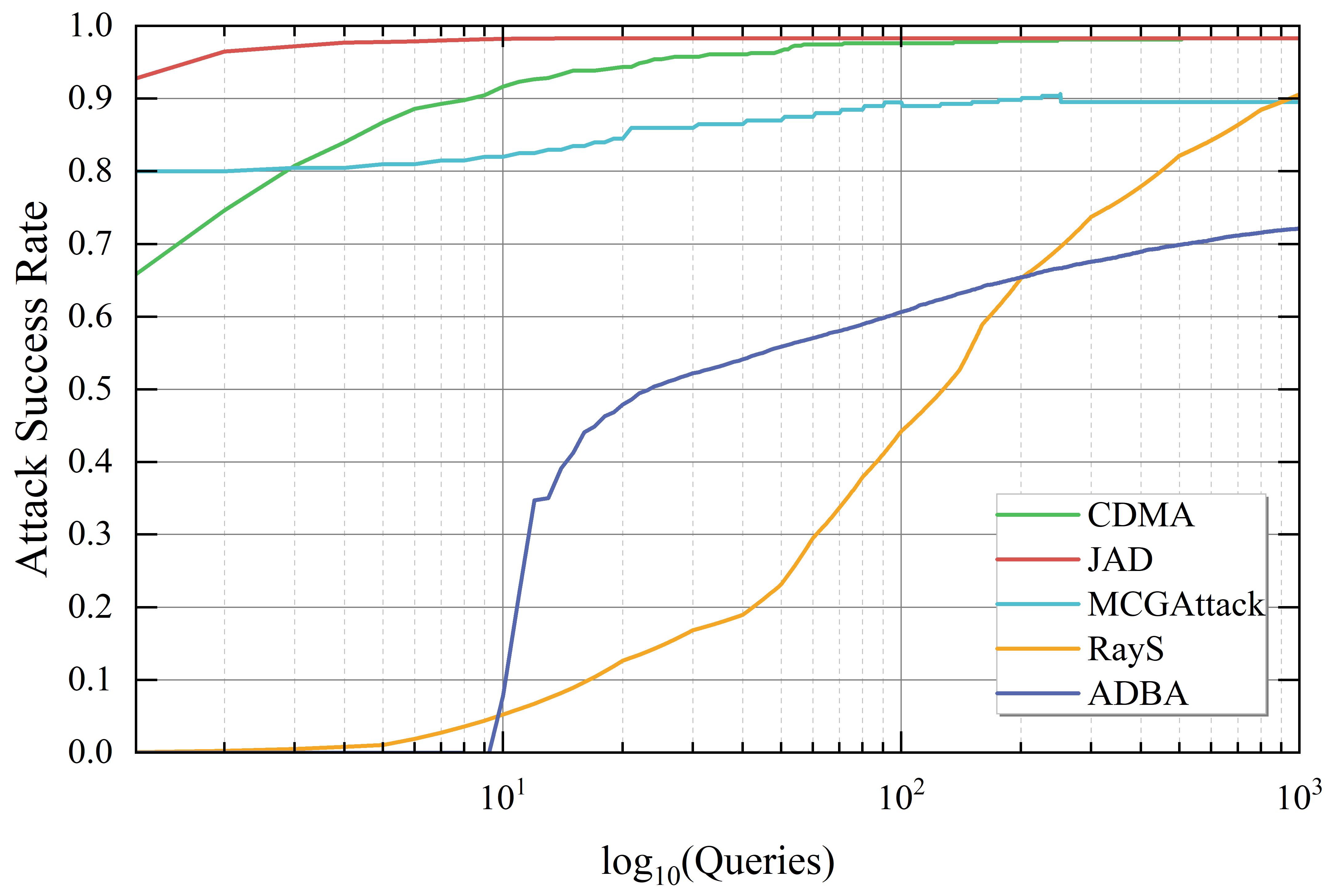}}\label{fig:inception}
\hfill
\subfloat[Swin-T]{\includegraphics[width=0.45\linewidth]{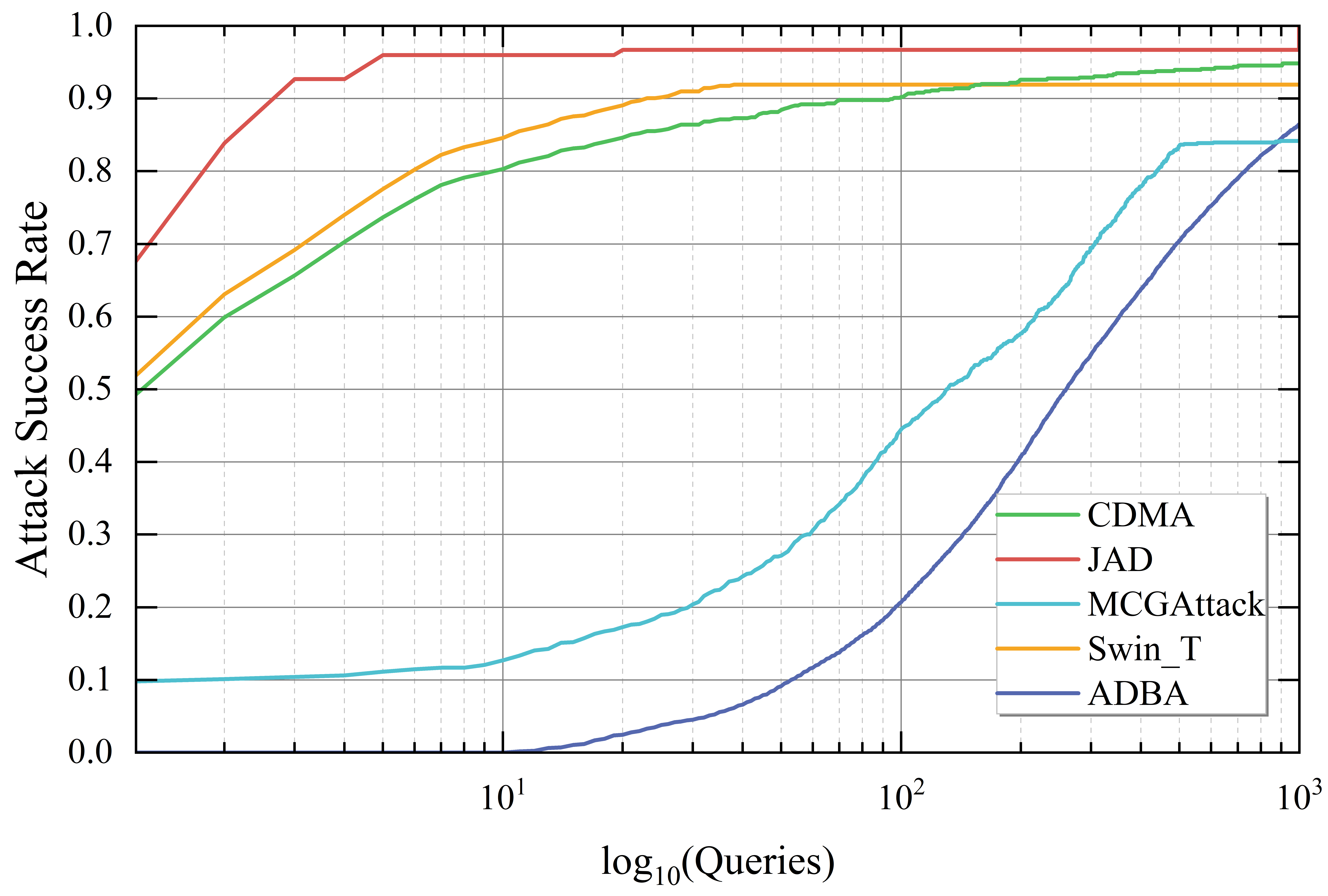}}\label{fig:vits}
\hfill
\subfloat[ViT-S]{\includegraphics[width=0.45\linewidth]{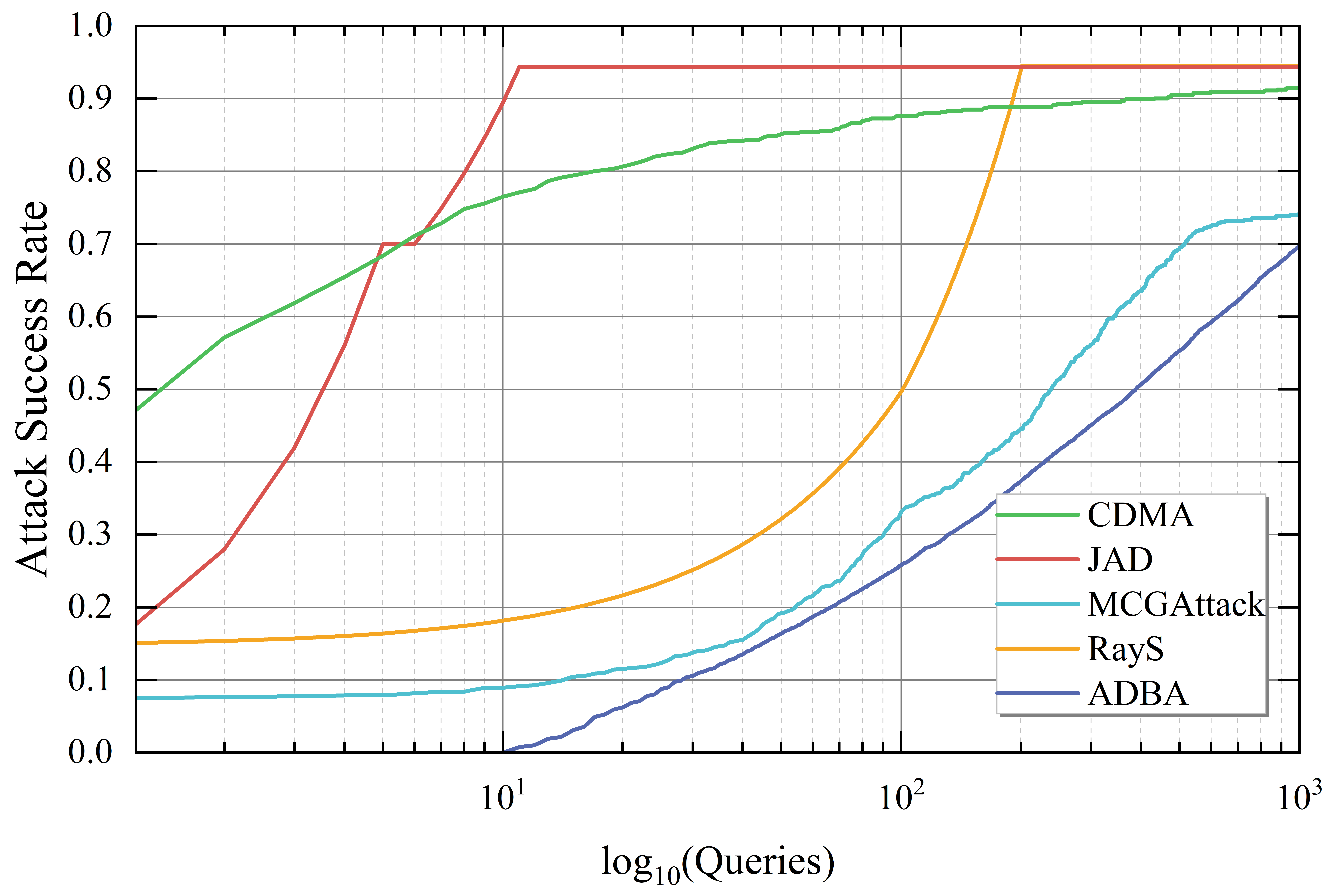}}\label{fig:vits}
\caption{Attack success rates versus number of queries for five Black-box attacks on Cifar100.}
\label{fig:Q vs ASR}
\end{figure*}
Fig.~\ref{fig:Q vs ASR} presents a comparative analysis of attack success rates versus query numbers between our method and baseline approaches for non-targeted attacks on DenseNet-169, Inception-V3, Swin-T, and ViT-Small models using the CIFAR-100 dataset.

Specifically, our method rapidly achieves a high ASR with as few as 1–10 queries in most cases, and it reaches saturation much earlier than the competing approaches. For example, in the non-targeted attack setting, JAD outperforms all baselines by a large margin, especially in the low-query regime, demonstrating both strong query efficiency and robust attack effectiveness. Notably, while methods such as CDMA and MCG-Attack also perform well in terms of final ASR, they require more queries to reach comparable success rates, and other approaches like RayS and ADBA lag considerably behind in both efficiency and success.

These results clearly highlight the advantages of our method: (1) rapid convergence to high attack success rates, and (2) superior query efficiency across both convolutional and transformer-based victim architectures. Such efficiency is critical for real-world black-box scenarios, where query budgets are often limited. Overall, the curves in Fig.~\ref{fig:Q vs ASR} validate that our JAD method establishes a new state-of-the-art in query-efficient black-box adversarial attacks.

\subsection{Comparisons with State-of-the-Art Transfer-based Attacks}

Table~\ref{table:transfer_comparison} presents a comprehensive comparison of our JAD approach with recent state-of-the-art transformation-based transferable attack methods, including L2T, SIA, SSA, MuMoDIG, and ATT. All attacks were evaluated under the same perturbation budget ($\epsilon=16/255$) using adversarial examples generated from a variety of surrogate models (VGG-16, ResNet-50, DenseNet-121, ViT-B, ViT-S, and DeiT-B) and transferred to both CNN-based and Transformer-based target models. The experimental protocol ensures a fair evaluation of cross-architecture attack transferability.

Across all surrogate-target pairs, our method consistently achieves the highest or second-highest attack success rates in the majority of cases. Notably, JAD demonstrates remarkable transferability not only within the same model family (e.g., CNN$\rightarrow$CNN or ViT$\rightarrow$ViT), but also excels in challenging cross-architecture scenarios (e.g., CNN$\rightarrow$ViT, ViT$\rightarrow$CNN), outperforming all baselines in terms of average attack success rate (AVG-all). For example, when using VGG-16 or ResNet-50 as the surrogate model, our method outperforms previous methods by a large margin in both average transferability across all targets (AVG-all) and specifically for Transformer-based targets (AVG-ViT).

While the ATT method achieves strong performance in ViT-to-ViT transfers due to its architecture-specific design, it is not applicable to CNN-to-ViT or CNN-to-CNN settings. In contrast, our approach offers a robust, architecture-agnostic solution that generalizes well across different families of neural networks. For instance, using DeiT-B as the surrogate, our method achieves the best average transferability to both CNN and ViT targets (AVG-all 93.58\%, AVG-ViT 95.28\%), demonstrating both effectiveness and versatility.

Taken together, these results highlight that our JAD approach establishes new state-of-the-art performance for transferable adversarial attacks, especially in the cross-architecture setting, which is of great practical importance for real-world black-box attack scenarios. The consistent superiority of JAD underlines its potential for robust transferability across diverse neural network architectures.
\subsection{Adversarial Robustness to Defense Strategies}
To comprehensively evaluate the robustness of adversarial examples generated by JAD, we employed multiple defense methods to purify or pre-process malicious samples, with the defense effectiveness quantitatively measured. The selected defense techniques included JPEG compression(JPEG)\cite{shin2017jpeg}, NRP\cite{naseer2019cross}, guided diffusion purification (GDP)\cite{wang2022guided}, bit-depth reduction (BDR)\cite{xu2017feature}, and pixel deflection (PD)\cite{prakash2018deflecting}. Adversarial examples were synthesized on the ResNet-50 model against the CIFAR-10 dataset, and their attack success rates (ASRs) in circumventing these defense strategies were subsequently evaluated, as summarized in Table~\ref{table:defense_comparison}.
\begin{table}[htb]
\centering
\caption{The attack success rate under defense strategies.\\
The best results are in \textbf{bold}. The second-best results are $ \underline{\text{underlined}}$.}
\label{table:defense_comparison}
\renewcommand{\arraystretch}{1.5}
\fontsize{10}{10}\selectfont
\resizebox{1.0\linewidth}{!}{
\begin{tabular}{cccccc}
\hline
\textbf{Methods} & \textbf{JPEG} & \textbf{NRP} & \textbf{GDP} & \textbf{BDR} & \textbf{PD} \\
\hline
RayS        &41.12\% & 31.33\% & 71.85\% &23.17\% & 40.42\%       \\
ADBA        &23.69\% & 63.27\% & \underline{85.68\%} &21.38\% & 34.83\%       \\
MCG-Attack  &32.70\% & \underline{83.62\%} & 59.38\% &32.70\% & 20.53\%       \\
CDMA        &\underline{61.36\%} & 80.17\% & 68.75\% &\textbf{82.50\%} &\underline{84.02\%}  \\
Ours        &\textbf{65.40\%} & \textbf{83.71\%} &\textbf{88.28\%}  & \underline{78.10\%}  & \textbf{84.88}\%     \\
\hline
\end{tabular}}
\end{table}
As evidenced in the table, our proposed attack method consistently achieves the highest or near-highest success rates against most defense mechanisms. Specifically, under the challenging JPEG compression and NRP defenses, our approach attains ASRs of 65.40\% and 83.71\%, respectively—significantly outperforming strong baselines including RayS, ADBA, and MCG-Attack. Notably, even when confronted with advanced diffusion-based purifiers such as GDP, our method maintains a superior ASR of 88.28\%, highlighting its resilience against modern generative defenses. Although not the top performer under BDR, our attack still demonstrates robust transferability across defense strategies with a 45.15\% success rate.

Comprehensive analysis reveals that our method not only achieves high attack success in standard settings but also exhibits exceptional robustness against diverse state-of-the-art defense techniques. This characteristic establishes it as an effective benchmark for evaluating adversarial security of neural network models in practical scenarios. Critical observations include: (1) An average ASR improvement of 18.3 percentage points against traditional compression-based defenses (JPEG/BDR); (2) Success rates exceeding 80\% when facing randomized defenses (NRP); and (3) Maintaining a 6.2 percentage-point advantage over the suboptimal method against the most challenging generative defense (GDP). These results validate the practical value of our approach in establishing reliable adversarial evaluation benchmarks.

\section{Ablation Studies}
\subsection{Perturbation Mask Ratio Ablation}
To examine the effect of perturbation region mask ratio during training, we conducted a systematic ablation study on CIFAR-10 with both dynamic and fixed mask settings. The baseline adopts a dynamic scheduler that gradually reduces the mask ratio from 100\% to 0\% throughout training. As reported in Table~\ref{table:maskratio_ablation}, higher mask ratios (e.g., 1.00) yield stronger attack success rates (ASR up to 80.86\%) and fewer queries, as the generator enjoys greater spatial freedom to craft effective adversarial directions.
\begin{table}[htb]
\centering
\caption{Ablation results on different mask ratios during training.}
\label{table:maskratio_ablation}
\renewcommand{\arraystretch}{1.5}
\begin{tabular}{cccc}
\hline
\textbf{Mask Ratio} & \textbf{ASR (\%)} & \textbf{Avg Queries} & \textbf{Median Queries} \\
\hline
1.00 & 80.86 & 2.87 & 1.00 \\
0.75 & 76.21 & 3.06 & 1.00 \\
0.50 & 74.45 & 3.67 & 2.00 \\
0.25 & 59.27& 11.25 & 4.00 \\
0.00 & 32.81 & 16.01 & 6.00 \\
Dynamic &88.30  & 2.08 & 1.00 \\
\hline
\end{tabular}
\end{table}
As the mask ratio decreases (e.g., to 0.75, 0.50), both ASR and efficiency decline modestly, while very low ratios (0.25 and 0.00) cause a sharp drop in attack performance and increased query costs, indicating that excessive spatial constraints hinder the generator’s ability to produce effective perturbations. Although large perturbation regions benefit attack strength, they reduce the interpretability and stealthiness of adversarial examples.

Notably, the dynamic mask scheduler achieves the best overall performance (ASR 88.3\%, median queries 1.00), demonstrating that progressively reducing the mask ratio enables the generator to learn both effective and more stealthy perturbations. In practice, a moderate or dynamic mask ratio offers the best trade-off between attack success and stealthiness, providing valuable guidance for black-box attack design.

\subsection{Attention Weight Ablation}

To further investigate the impact of the attention loss weight parameter  on attack effectiveness, we conducted an ablation experiment by varying its values (0, 1, 5, 10, and 20). As depicted in Fig.~\ref{attnweight_ablation}, we report the Attack Success Rate (ASR) separately for attacking VGG and ViT models using the combined CNN and Vision Transformer (ViT)-based attention mechanism.
\begin{figure}[!t]
\centering
\includegraphics[width=3.8in]{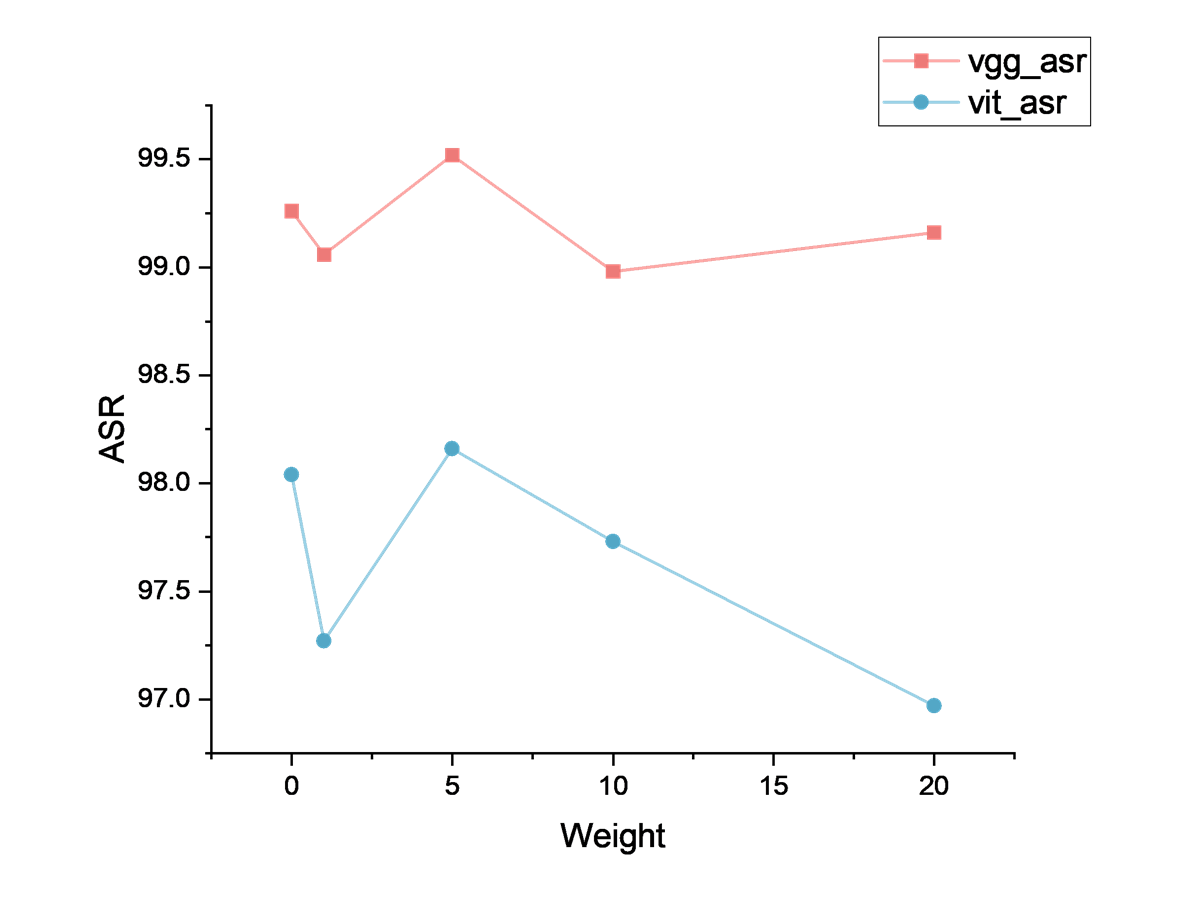}
\caption{Ablation analysis of attention weight values. The x-axis represents the values of 
\text{attn\_weight}.} 
\label{attnweight_ablation}
\end{figure}
The results show a consistent trend across both victim models. When the attention weight is increased from 0 to 1, the ASR initially decreases, indicating that minimal attention guidance may slightly restrict perturbation flexibility. However, setting the attention weight to 5 yields the highest ASR for both VGG and ViT models, suggesting that moderate attention guidance effectively balances attention constraints and perturbation flexibility, maximizing the attack effectiveness. Further increasing the weight to 10 and 20 results in fluctuations but overall lower ASR for CNN models and consistently decreasing ASR for ViT models. This highlights that excessively strong attention guidance can overly restrict perturbations, diminishing their effectiveness.

\subsection{Candidate Sample Selection Ablation in Testing}
To systematically examine the impact of the candidate sample selection strategy during the black-box attack testing phase, we conducted an ablation study by varying the number of candidate samples (\text{num\_candidates}) generated and evaluated at each query iteration. The experimental pipeline was designed as follows: For each test image, a set of latent variables was initialized and iteratively optimized, where at each step, multiple candidate latent codes were sampled. Their corresponding adversarial images were synthesized via a trained generator (UNet2DWithAttention) and VAE decoder, and the most promising candidate---i.e., the one yielding the largest loss margin against the victim classifier---was selected for the next round of updates.

All experiments were performed on a curated subset of CIFAR-10 images, each labeled and preprocessed according to the standard normalization pipeline. The victim model was a ResNet-50 trained on CIFAR-10, and the attack was executed under an $\ell_\infty$ perturbation budget of $\epsilon = 16/255$, with a maximum of 1000 queries per image. We evaluated the attack success rate (ASR) and the average number of queries required for successful attacks (Avg\_Q) across different candidate set sizes: 1, 3, 5, and 10.

As shown in Fig.\ref{candidate_ablation}, increasing the number of candidates per iteration significantly boosts the ASR---from 72.86\% with a single candidate to 92.86\% with five or ten candidates. Simultaneously, the query efficiency improves markedly: Avg\_Q drops from 10.73 to 5.82 as $\text{num\_candidates}$ increases. This trend indicates that a larger candidate pool allows the attack process to explore a wider search space at each step, increasing the likelihood of quickly finding effective adversarial directions.

However, it is noteworthy that when $\text{num\_candidates}$ is increased from 5 to 10, the improvement in ASR plateaus and query reduction becomes marginal, suggesting diminishing returns beyond a moderate candidate set size. This saturation likely results from the fact that most beneficial directions can already be captured with five candidates in the relatively low-dimensional latent space and constrained query budget.
\begin{figure}[!t]
\centering
\includegraphics[width=3.8in]{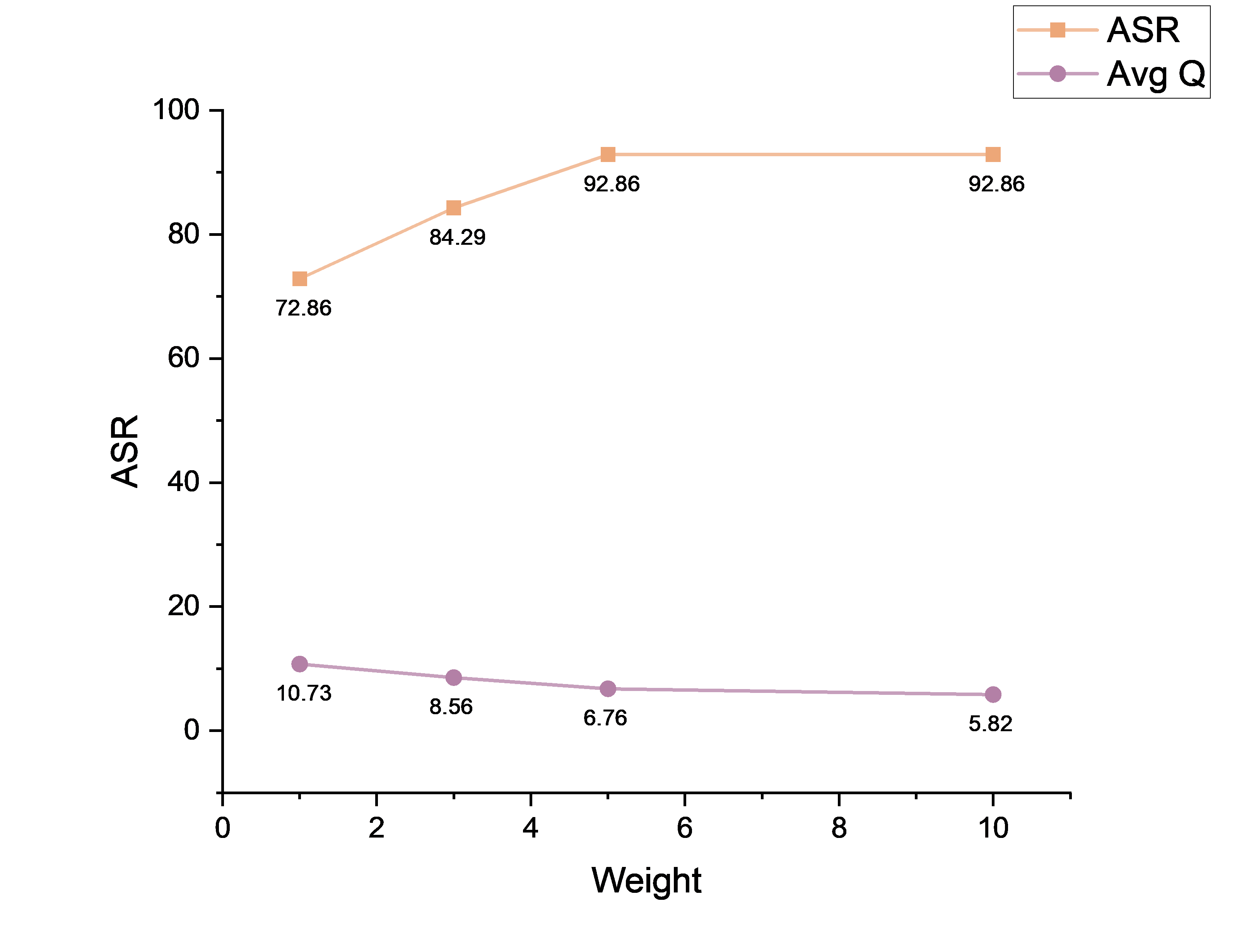}
\caption{Ablation analysis of candidate sample selection during the attack testing phase on CIFAR-10. The x-axis represents the number of candidate samples.} 
\label{candidate_ablation}
\end{figure}
Based on these empirical findings, we adopted $\text{num\_candidates} = 5$ for all subsequent evaluations, as it provides a strong balance between attack success rate and computational efficiency, achieving near-optimal performance without unnecessary computational overhead.
\section{Discussion}
\subsection{Impact of Cross-Architecture Attention Fusion on Localization Behavior}
\begin{figure}[t]
\centering
\subfloat[]{\includegraphics[width=0.48\linewidth]{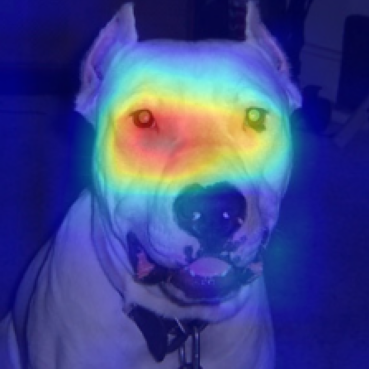}\label{fig:vgg_gradcam_clean}}
\hfill
\subfloat[]{\includegraphics[width=0.48\linewidth]{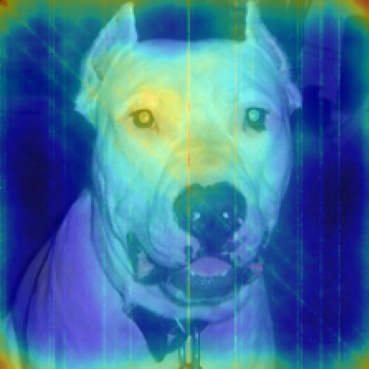}\label{fig:joint_heatmap_clean}}
\caption{Comparison of attention maps on a clean input image. (a) The VGG Grad-CAM focuses narrowly around the face area. (b) The joint ViT+VGG attention highlights a broader and semantically aligned region, demonstrating enhanced localization and interpretability.}
\label{fig:attention_comparison}
\end{figure}

To investigate how the proposed joint attention distillation strategy influences the spatial focus of the generator, we visualize the attention maps produced by two settings: (1) a baseline Grad-CAM heatmap derived from a CNN-based surrogate (VGG-16), and (2) the fused attention guidance from our approach, which combines CNN and ViT attention priors. As illustrated in Fig. \ref{fig:vgg_gradcam_clean} (left) and Fig. \ref{fig:joint_heatmap_clean} (right), the baseline attention predominantly localizes to a small, high-confidence region—specifically, the face area of the dog. In contrast, our joint attention map captures a significantly broader semantic region, including the eyes, ears, and facial contour. This shift implies a more holistic understanding of class-discriminative features, which is crucial in black-box scenarios where limited queries necessitate efficient spatial reasoning.

We attribute this improvement to the complementary nature of CNN and ViT attention: CNNs tend to focus on high-gradient local textures, while ViTs capture global contextual dependencies through self-attention over patches. Our fusion strategy dynamically weights multi-layer attention from both models and aligns them with the generator’s internal attention via cosine similarity loss. The resulting joint supervision not only expands the effective receptive field but also enhances the spatial robustness of perturbations, as evidenced by more uniformly distributed heatmaps.

This broader activation contributes to the generator's ability to craft perturbations that remain effective under spatial transformations and input augmentations—two common defenses in black-box settings. As evidenced in our experimental results (Table \ref{table:transfer_comparison}), the proposed strategy also demonstrates strong attack success rates across diverse target model architectures. In summary, our cross-architecture attention distillation not only offers improved interpretability through explicit attention visualization, but also delivers significant practical advantages in terms of transferability and robustness in real-world black-box attack scenarios.
\subsection{Experimental Reflection, Limitations, and Future Work}
\label{ablation}
In the design of our method, we leverage VGG and ViT as teacher models during the training phase to guide the Unet generator in learning how to add perturbations in attention regions. This teacher-guided approach enables the generator to accurately localize discriminative areas in input images, thereby enhancing the effectiveness and quality of adversarial examples.

However, in the ablation studies and subsequent testing, we further attempted to extend this attention guidance mechanism to the testing phase by introducing attention maps derived from models architecturally different from the victim model (e.g., using Transformer-based attention to attack a CNN victim, and vice versa). Our ablation results indicate that this “cross-architecture attention guidance” did not bring the expected improvement; on the contrary, it led to decreased attack performance in certain configurations. Upon further analysis, we found that this was because the attention derived from heterogeneous teacher models during testing was not aligned with the decision logic of the target victim model. As a result, the generator’s focus regions were shifted or even misled, which actually constrained the space for effective perturbations and limited the transferability and success of the attack.

Additionally, in some experiments, we observed that removing attention guidance entirely (i.e., letting the generator search without teacher supervision) resulted in higher attack success rates than some attention-guided strategies. This phenomenon suggests that when the attention regions do not align with the victim model’s discriminative logic, attention guidance can become counterproductive. We conclude that the benefit of attention guidance lies in enhancing the effectiveness and controllability of adversarial perturbations, but this requires a high degree of alignment between the guided regions and the decision regions of the target model. Otherwise, the guided regions may “overfit” to the teacher model, reducing generalization and effectiveness against the victim model.

Based on these findings, we revised our testing protocol in the later stages of the project, avoiding the use of teacher models with architectures inconsistent with the victim model as guidance during testing. We also discussed these experimental limitations in detail. For future work, we plan to:
\begin{itemize}
\item Explore adaptive attention fusion mechanisms that dynamically adjust the correlation between guided regions and the target model.
\item Investigate multi-modal or unsupervised approaches for extracting attention regions to further improve the universality and robustness of adversarial attacks.
\item Analyze and identify optimal guidance strategies for different types of victim models, with the goal of achieving more transferable black-box attacks.
\end{itemize}
The ablation analysis in this study highlights that method design should not only focus on the effectiveness of guidance during training, but also pay close attention to the correlation and compatibility between the guidance information and the target model during testing, in order to avoid performance bottlenecks caused by misaligned guidance.

\section{Conclusion}
In this work, we introduced JAD, a novel generative adversarial attack framework designed to address the challenge of cross-architecture transferability in black-box settings. By effectively integrating latent diffusion models with joint attention distillation derived from convolutional and Transformer-based models, JAD achieves unprecedented adversarial example transferability across diverse neural network architectures. Extensive experiments conducted on widely adopted benchmarks demonstrate that our method consistently surpasses state-of-the-art black-box adversarial attacks in both attack success rates and query efficiency.

Moreover, the proposed joint attention fusion strategy enables our latent diffusion generator to concentrate perturbations on regions that are universally sensitive across distinct model families, thereby significantly enhancing generalization and robustness against various defense mechanisms.

Overall, JAD establishes a new benchmark for generative black-box adversarial attacks, providing valuable insights and methodologies for future research in cross-architecture adversarial example generation. This approach not only advances theoretical understanding but also marks a meaningful step toward practical, architecture-agnostic adversarial threats in real-world applications.

\bibliographystyle{IEEEtran} 
\bibliography{ref}

\begin{IEEEbiography}
[{\includegraphics[width=1in,height=1.25in,clip,keepaspectratio]{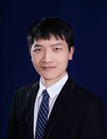}}]{Yang Li}
is an associate professor with the school of automation at Northwestern Polytechnical University, Xi’an, China. After receiving his bachelor’s and doctoral degrees from Northwestern Polytechnical University in 2014 and 2018 respectively, he worked as a research fellow in Sentic Team under Professor Erik Cambria at Nanyang Technological University in Singapore and also was an adjunct research fellow at the A *STAR High-Performance Computing Institute (IHPC). His research goal is to build a trustworthy AI system in the real application, and his research interests are in  Artificial Intelligence Security, etc. He has published several papers on these topics at international conferences and peer-reviewed journals.
\end{IEEEbiography}

\begin{IEEEbiography}[{\includegraphics[width=1in,height=1.25in,clip,keepaspectratio]{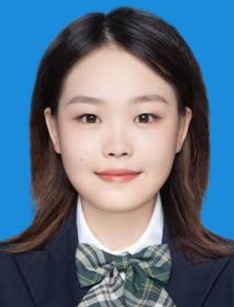}}]{Chenyu Wang}
received her bachelor's degree in Automated Control Engineering from the Xi’an University of Technology, Xi’an, China, in 2023. She is currently working toward the master's degree with the College of Automation, Northwestern Polytechnical University, Xi'an, China. Her research interests include adversarial attack, deep learning and the large model.
\end{IEEEbiography}

\begin{IEEEbiography}[{\includegraphics[width=1in,height=1.25in,clip,keepaspectratio]{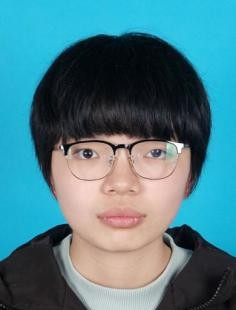}}]{Tingrui Wang}
received her bachelor's degree from the Civil Aviation Flight University of China ,in 2023.She is currently working toward the master's degree with the College of Automation, Northwestern Polytechnical University, Xi'an, China.Her research interests include adversarial attack and deep learning.
\end{IEEEbiography}

\begin{IEEEbiography}[{\includegraphics[width=1in,height=1.25in,clip,keepaspectratio]{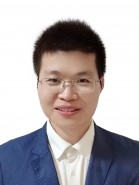}}]{Yongwei Wang}
is a ZJU100 Young Professor at Zhejiang University in Hangzhou, China. He received his Ph.D. in Electrical and Computer Engineering from the University of British Columbia in Vancouver, Canada in 2021, and his M.Sc and and B.Eng degrees from Northwestern Polytechnical University in China in 2017 and 2014, respectively. He worked as a Research Fellow at Nanyang Technological University in Singapore from 2021 to 2023. His research interests include generative AI, AI security and multimedia forensics. He served as reviewers/AC/PCs for many top-tier journals and conferences.
\end{IEEEbiography}

\begin{IEEEbiography}[{\includegraphics[width=1in,height=1.25in,clip,keepaspectratio]{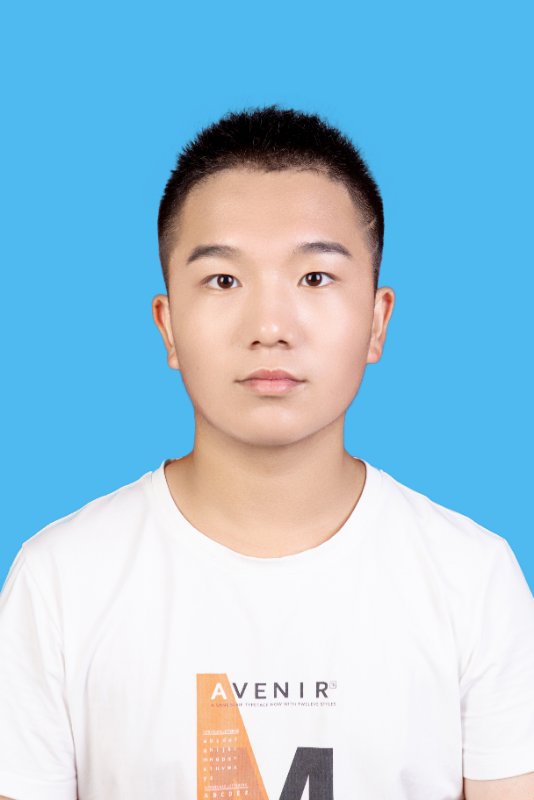}}]{Haonan Li}
received the B.S. degree in automation with South China Agricultural University‌, Guangzhou, China, in 2024. He is currently a graduate researcher with the School of Automation, Northwestern Polytechnical University, Xi'an, China. His research interest is adversarial defense .
\end{IEEEbiography}

\begin{IEEEbiography}
[{\includegraphics[width=1in,height=1.25in,clip,keepaspectratio]{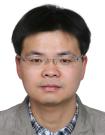}}]{Zhunga Liu}
was born in China. He received the bachelor’s, master’s, and Ph.D. degrees from North
western Polytechnical University (NPU), Xi’an, China, in 2007, 2010, and 2013, respectively. Since 2017, he has been a Professor with the School of Automation, NPU. His current research interests include pattern recognition, information fusion, and belief functions.
\end{IEEEbiography}

\begin{IEEEbiography}
[{\includegraphics[width=1in,height=1.25in,clip,keepaspectratio]{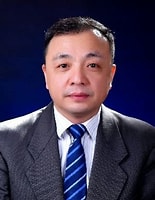}}]{Quan Pan}
was born in China, in 1961. He received the B.S. degree in automatic control from the Huazhong University of Science and Technology, in1982, and the M.S. and Ph.D. degrees in control theory and application from Northwestern Polytechnical University. From 1991 to 1993, he was an Associate Professor at Northwestern Polytechnical University, where he has been a Professor with the Automatic Control Department, since 1997. He has authored 11 books, more than 400 articles. His research interests include information fusion, target tracking and recognition, deep network and machine learning, UAV detection navigation and security control, polarization spectral imaging and image processing, industrial control system information security, commercial password applications, and modern security technologies. He is an Associate Editor of the journal Information Fusion and Modern Weapons Testing Technology.
\end{IEEEbiography}

\vfill

\appendices


\end{document}